\documentclass{article}

\usepackage[preprint]{neurips_2026}

\usepackage[utf8]{inputenc} 
\usepackage[T1]{fontenc}    
\usepackage{url}            
\usepackage{booktabs}       
\usepackage{amsfonts}       
\usepackage{nicefrac}       
\usepackage{microtype}      
\usepackage{xcolor}         

\usepackage{color}
\usepackage{amsthm}
\usepackage{amssymb}
\usepackage{amsmath}
\usepackage{bbm}

\newtheorem{lemma}{Lemma}
\usepackage{amsthm}

\usepackage{algorithm}
\usepackage{algpseudocode}
\usepackage{caption} 
\usepackage{booktabs} 
\usepackage{multirow}

\usepackage{graphicx}
\usepackage{hyperref}

\usepackage{microtype}

\usepackage{inconsolata}
\usepackage{amsmath}
\usepackage{amssymb}

\usepackage{xcolor}  
\usepackage{url}
\usepackage{booktabs} 

\hypersetup{
	colorlinks=true, 
    citecolor=blue, 
    linkcolor=blue,
    urlcolor=blue,
	pdfborder={0 0 0},
}
\usepackage{xcolor}
\usepackage{tcolorbox}
\usepackage{listings}
\usepackage{tcolorbox}
\usepackage{xcolor}
\usepackage{float}
\usepackage{wrapfig} 

\definecolor{commentcolor}{RGB}{0,100,0}   
\definecolor{keywordcolor}{RGB}{0,0,255}   
\definecolor{stringcolor}{RGB}{163,21,21}  
\definecolor{identifiercolor}{RGB}{0,0,0}  

\newtcolorbox{mybox}[1][]{
    title=#1,
    fonttitle=\small,
    fontupper=\small,
    left=2mm,
    right=2mm,
    top=1mm,
    bottom=0mm,
}
\lstdefinestyle{mystyle}{
    commentstyle=\color{commentcolor},
    keywordstyle=\color{keywordcolor}\bfseries,
    stringstyle=\color{stringcolor},
    identifierstyle=\color{identifiercolor},
    basicstyle=\ttfamily\lst@ifdisplaystyle\tiny\fi,
    breakatwhitespace=false,
    breaklines=true,
    captionpos=b,
    keepspaces=true,
    numbers=none,
    numbersep=5pt,
    showspaces=false,
    showstringspaces=false,
    showtabs=false,
    tabsize=2,
    xleftmargin=0pt,
}
\newtheorem{theorem}{Theorem}

\newcommand{\argmax}{\mathop{\rm arg~max}\limits}

\title{Tail Competition Explains Scaling in Best-of-N Sampling for Verifiable Problems }

\author{%
  Taro Yano \\
  NEC Corporation\\
  \texttt{taro\_yano@nec.com} \\
  \And
  Yoichi Ishibashi \\
  NEC Corporation \\
  \texttt{yoichi-ishibashi@nec.com} \\
  \AND
  Masafumi Oyamada \\
  NEC Corporation \\
  \texttt{oyamada@nec.com} \\
}

\begin{document}

\maketitle

\begin{abstract}
Best-of-N sampling is a simple inference-time scaling strategy for large language models, yet its behavior under incomplete reward models remains poorly understood. We study Best-of-N as a competition between the extreme reward tails of correct and incorrect responses. This view shows that success depends not only on the probability of sampling a correct response, but also on whether correct responses dominate incorrect ones in the upper tail. Using extreme value theory, we characterize three asymptotic regimes: correct-tail-dominant, incorrect-tail-dominant, and critical explaining both monotonic gains and reward-hacking-induced degradation. We further derive finite-sample scaling laws for representative reward distributions, showing that, under Gaussian rewards, incorrect responses with sufficiently large variance can dominate selection even when their mean reward is lower. 
Unlike oracle selection, convergence with an incomplete selector can be polynomial rather than exponential. 
Finally, we propose a lightweight tail-tension criterion to estimate when additional sampling may become harmful. Experiments on synthetic data and real reward model scores are consistent with the theoretically predicted regimes, scaling laws, and diagnostic behavior, suggesting that tail competition provides a unified lens on Best-of-N sampling for verifiable problems.

\end{abstract}
\section{Introduction}

Large language models (LLMs)~\citep{DBLP:journals/corr/abs-2303-08774,anthropic2026claudeopus46,DBLP:journals/corr/abs-2505-09388,DBLP:journals/corr/abs-2407-21783,DBLP:journals/corr/abs-2512-02556,DBLP:journals/corr/abs-2412-08905,DBLP:journals/corr/abs-2512-20856,DBLP:journals/corr/abs-2503-19786,DBLP:journals/corr/abs-2310-06825,DBLP:journals/corr/abs-2507-20534,DBLP:journals/corr/abs-2506-13585} have achieved strong performance across reasoning, coding, and agentic tasks. A central approach to improving such systems at inference time is inference-time scaling including Chain-of-Thought~\citep{DBLP:conf/nips/Wei0SBIXCLZ22,DBLP:conf/nips/KojimaGRMI22,DBLP:journals/corr/abs-2408-03314,DBLP:journals/corr/abs-2507-06261}, self-refining methods~\citep{DBLP:conf/nips/MadaanTGHGW0DPY23,DBLP:conf/nips/ShinnCGNY23} or planning and acting~\citep{DBLP:conf/iclr/YaoZYDSN023}. 

One representative approach is Best-of-N (BoN) sampling. 
BoN is a simple yet powerful inference-time scaling method that generates multiple candidate responses from an LLM and selects the response with the highest score under a reward model.
\citet{DBLP:journals/corr/abs-2407-21787} and \citet{DBLP:conf/icml/SchaefferKHJPLJ25} represent one line of work that concisely illustrates why inference-time scaling is effective: even if a single inference does not yield a correct answer, performing $n$ inferences makes the probability that at least one correct answer is included scale exponentially in $n$ as $\exp(-\mathrm{e}^{-np})\approx 1 - \mathrm{e}^{-np}$. However, they do not discuss the rate at which performance scales when an incomplete reward function is used. Moreover, the performance of BoN does not necessarily improve monotonically with the number of candidates $n$. For example, reward hacking~\citep{DBLP:conf/icml/GaoSH23} is known to occur, where responses that receive high scores from the BoN reward model may nevertheless have low scores under the true reward model. Several existing studies have discussed reward hacking from the perspectives of reward-function error~\citep{DBLP:journals/corr/abs-2506-19248}, regret bounds~\citep{DBLP:journals/corr/abs-2507-05913}, and the relationship between true and proxy rewards~\citep{DBLP:journals/corr/abs-2603-06797}. However, when and why reward hacking arises remains insufficiently understood.

We analyze BoN for verifiable problems from the perspective of the reward-score distributions of correct and incorrect responses. Starting from the fact that BoN succeeds only when the maximum reward among correct responses exceeds that among incorrect responses~\citep{DBLP:journals/corr/abs-2511-18630}, we show that both the scaling rate and reward hacking can be explained by the competition between the upper tail of the score distribution for correct responses and the upper tail of the score distribution for incorrect responses. In particular, when the RM scores of correct and incorrect responses follow Gaussian distributions with means $\mu_c, \mu_w$ and variances $\sigma_c^2, \sigma_w^2$, respectively, we show that the scaling is polynomial of $n$ rather than exponential, and that reward hacking occurs when $\mu_c > \mu_w$ and $\sigma_c < \sigma_w$. This clearly indicates that the competition between correct and incorrect responses is governed by the means of the distributions when $n$ is small, whereas it is governed by their variances when $n$ is large. Thus, reward hacking can be understood as arising from the competition between two effects: performance improves for small $n$ due to $\mu_c > \mu_w$, while performance deteriorates for large $n$ due to $\sigma_w > \sigma_c$.

Furthermore, we connect the theory to a practical early stopping method. From the tail competition between the two distributions, the number of candidates $n^{\star}$ at which correct and incorrect outcomes are balanced can be derived analytically. However, at inference time, the correctness labels of individual candidate responses are unavailable, so the score distributions of the correct and incorrect classes cannot be used directly. We therefore fit a two-component Gaussian mixture distribution to the reward-model scores in an unsupervised manner, associating the high-mean component with correct responses and the low-mean component with incorrect responses.
Using the estimated distribution parameters, we compute $n^{\star}$ and propose an early stopping criterion that stops sampling before the planned budget $n$ when the hacking condition is satisfied. This method requires no correctness labels and can be applied by adding only lightweight distribution fitting to standard BoN inference.

The contributions of this work are as follows.
\begin{itemize}
    \item We characterize BoN as a tail competition between correct and incorrect reward distributions, yielding phase conditions including monotonic improvement and reward hacking.
    \item We derive finite-n scaling laws, showing that incomplete selection can converge polynomially rather than exponentially.
    \item We propose a label-free tail-tension early-stopping rule.
    \item We confirm the effectiveness of our theoretical results and the proposed early stopping method in experiments.
\end{itemize}
\section{Best-of-N Sampling in Verifiable Problems}
\subsection{Setting}
BoN sampling first generates $n$ responses $\{y_i\}_{i=1}^n$ to a problem $x$ using an LLM $\pi(\cdot\mid x)$. In this paper, we assume that $\{y_i\}_{i=1}^n$ are independent and identically distributed. Next, a reward model is used to compute a reward $r(x, y_i) \in \mathbbm{R}$ for each of the $n$ responses, and the response with the highest score is selected as the final answer $\hat{y}_{\mathrm{bon}}$ as follows:
\begin{eqnarray}
    \hat{y}_{\mathrm{bon}} = \argmax_{y \in \{y_i\}_{i=1}^n} r(x, y).
\end{eqnarray}

In this paper, we consider verifiable problems for which there exists a set of correct answers $Y^\star$ for a problem $x$. We define a response as correct when $y\in Y^\star$, and incorrect when $y\notin Y^\star$. For notational simplicity, we define the probability that BoN produces a correct answer for problem $x$, denoted by $S_n(x)$, as $S_n(x):=\mathbbm{E}_{\{y_i\sim\pi(\cdot\mid x)\}_{i=1}^n}\left[\Pr(\hat{y}_{\mathrm{bon}}\in Y^\star \mid x,\{y_i\}_{i=1}^n)\right]$.

Since $y$ generated by the LLM is a random variable following $\pi(\cdot \mid x)$, the reward value $r(x, y)$ itself is also a random variable. Here, we assume that the conditional cumulative distribution functions of the reward $s$ given whether the response is correct or incorrect,
$F_c(s\mid x):=\Pr(r(x,y)\le s \mid y\in Y^\star,x)$ and
$F_w(s\mid x):=\Pr(r(x,y)\le s \mid y\notin Y^\star,x)$,
exist and are continuous.

\subsection{Decomposition into Generation and Aggregation}
BoN can be decomposed into two phases: generating responses and selecting one from them. First, in the generation phase, let
$p(x):=\Pr_{y\sim\pi(\cdot\mid x)}(y\in Y^\star \mid x)$
be the probability that a single generation is correct, i.e., Pass@$1$. Then, when generating $n$ responses, we can expect approximately $n\cdot p(x)$ of them to be correct. Next, in the selection phase, when the generated responses contain $k$ correct answers, let
$A_n(k; x):=\Pr(\hat{y}_{\mathrm{bon}}\in Y^\star \mid x,k: \text{correct})$
denote the probability that the selection succeeds. Then the success probability can be decomposed according to the number of correct and incorrect responses generated by the generator as follows:
\begin{eqnarray}
    S_n(x)=\sum_{k=0}^n\binom{n}{k}p(x)^k(1-p(x))^{n-k}A_n(k;x).
\end{eqnarray}
This means that the number of correct answers produced by the generator fluctuates around $np(x)$, and $S_n$ is obtained by summing over these fluctuations. On the other hand, if the selector's accuracy does not change significantly under such fluctuations, then $A_n(\lfloor np(x)\rfloor; x)$ is almost equal to $S_n(x)$ as the following lemma.

\begin{lemma}[Reduction to the typical region]
\label{prop:typical_region_reduction}
Fix an input $x$ and assume that $0<p(x)<1$. Let $k_n:=\lfloor np(x)\rfloor$. Suppose that, for any $M>0$,
\[
\mathbb{E}_{k\sim \mathrm{Bin}(n, p(x))}\!\left[
|A_n(k;x)-A_n(k_n;x)|
\mathbf{1}\{|k-k_n|\le M\sqrt n\}
\right]
\;\xrightarrow[n\to\infty]{}\;0.
\]
Then the following holds:
\[
\limsup_{n\to\infty}|S_n(x)-A_n(k_n;x)|
\le
2e^{-2(M-1)^2}.
\]
\end{lemma}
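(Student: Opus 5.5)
We write $S_n(x)$ as an expectation over $K\sim\mathrm{Bin}(n,p(x))$ and split on whether $K$ lies in the typical window. Since $\sum_k\binom{n}{k}p^k(1-p)^{n-k}=1$, the decomposition of $S_n(x)$ gives
\[
S_n(x)-A_n(k_n;x)=\mathbb{E}_{K\sim\mathrm{Bin}(n,p(x))}\bigl[A_n(K;x)-A_n(k_n;x)\bigr].
\]
We multiply the integrand by $\mathbf{1}\{|K-k_n|\le M\sqrt n\}+\mathbf{1}\{|K-k_n|>M\sqrt n\}$ and apply the triangle inequality. This bounds $|S_n(x)-A_n(k_n;x)|$ by two terms. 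The first is the typical-region term $\mathbb{E}[|A_n(K;x)-A_n(k_n;x)|\mathbf{1}\{|K-k_n|\le M\sqrt n\}]$. The second is the atypical term $\mathbb{E}[|A_n(K;x)-A_n(k_n;x)|\mathbf{1}\{|K-k_n|>M\sqrt n\}]$.

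By hypothesis, the typical-region term tends to $0$ for the fixed $M$, so it contributes nothing to the $\limsup$.

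For the atypical term, both $A_n(K;x)$ and $A_n(k_n;x)$ are probabilities, so the integrand is at most $1$. The term is therefore bounded by $\Pr(|K-k_n|>M\sqrt n)$. Since $|k_n-np|\le 1\le\sqrt n$, the event $|K-k_n|>M\sqrt n$ implies $|K-np|>(M-1)\sqrt n$. This uses the implicit assumption $M>1$; for $M\le1$ the claimed bound $2e^{-2(M-1)^2}\ge 2e^{0}$ is trivial, wait, it is $\ge 2e^{-0}\cdot$ something only at $M=1$, so we note that for $M\le 1$ the right side is at least... Since we only need $M>1$ to be meaningful, we restrict to it. Hoeffding's inequality for a sum of $n$ independent Bernoulli$(p)$ variables, each with range $1$, then gives
\[
\Pr\bigl(|K-np|>(M-1)\sqrt n\bigr)\le 2\exp\!\Bigl(-\tfrac{2(M-1)^2 n}{n}\Bigr)=2e^{-2(M-1)^2}.
\]
This bound holds uniformly in $n$, so taking $\limsup$ yields the claim.

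The only delicate points are bookkeeping: the shift from $k_n$ to $np$, which costs the "$-1$", and the case $M\le1$. For $M\le1$ the right-hand side is at least $2e^{-2(M-1)^2}$. This equals $2$ at $M=1$, which trivially dominates $|S_n-A_n|\le1$, but for $M<1$ it can fall below $1$. For that case we would use the weaker shift bound $|k_n-np|\le1$ directly. If $M\sqrt n<1$ eventually fails for fixed $M>0$, the same argument with $(M-1/\sqrt n)$ in place of $M-1$ gives $2e^{-2(M-1/\sqrt n)^2}\to 2e^{-2M^2}\le 2e^{-2(M-1)^2}$ whenever $M\ge1/2$. For $M<1/2$ the bound exceeds $2e^{-2\cdot 1/4}>1$ and is trivial. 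Using this sharper shift in the main argument handles all $M$ uniformly, so I would write the proof that way.
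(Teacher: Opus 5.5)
Your main argument for $M>1$ is correct and is essentially the paper's proof. You write $S_n(x)-A_n(k_n;x)$ as $\mathbb{E}[A_n(K;x)-A_n(k_n;x)]$ and split on the window $|K-k_n|\le M\sqrt n$. The inner term vanishes by hypothesis, and the outer term is bounded by $\Pr(|K-k_n|>M\sqrt n)$. You pay the $-1$ via $|np-k_n|<1\le\sqrt n$ and finish with two-sided Hoeffding. Your remark that the shift really costs only $1/\sqrt n$ is also valid. It gives $\limsup_n|S_n(x)-A_n(k_n;x)|\le 2e^{-2M^2}$, which sharpens the paper's bound whenever $M\ge 1/2$.

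The final paragraph, however, contains a false step. For $M<1/2$ it is your \emph{derived} bound $2e^{-2M^2}$ that exceeds $2e^{-1/2}>1$. The \emph{target} $2e^{-2(M-1)^2}$ is smaller than $2e^{-1/2}$ there. It drops below $1$ once $M<1-\sqrt{(\ln 2)/2}\approx 0.41$, and tends to $2e^{-2}\approx 0.27$ as $M\to 0$. So your derived bound is vacuous there rather than implying the claim, and the assertion that the sharper shift ``handles all $M$ uniformly'' is wrong.

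Splitting at the same scale $M$ cannot reach the target for small $M$. By the CLT, $\Pr(|K-k_n|>M\sqrt n)\to 2\bigl(1-\Phi(M/\sqrt{p(1-p)})\bigr)$, which is close to $1$. The fix uses the fact that the hypothesis is assumed for \emph{every} $M>0$. Run your argument at an arbitrary $M'>1$ and let $M'\to\infty$. This gives $\limsup_n|S_n(x)-A_n(k_n;x)|=0$, hence the stated bound for every $M$. Alternatively, simply state the result for $M>1$, which is all the paper proves. The retracted aside (``$\ge 2e^{0}$ is trivial, wait\dots'') should be deleted.
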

The proof is given in Appendix~\ref{appendix:ass:typical_stability proof}. This result shows that, in the regime where $n$ is sufficiently large, the success probability of BoN, $S_n(x)$, is dominated by the selection accuracy in the typical region, where there are $np$ correct responses and $n(1-p)$ incorrect responses. Through this decomposition, if the success probability of BoN selection $A_n(k;x)$ does not vary rapidly with the number of correct responses near $k=k_n$, then BoN reduces to the typical-region selection performance. Therefore, in the following discussion, we focus on this quantity $A_n(k_n; x)$.

\section{Tail-Competition}\label{sec: tail-competition}
\begin{figure}[htb]
    \centering
    \includegraphics[width=0.6\linewidth]{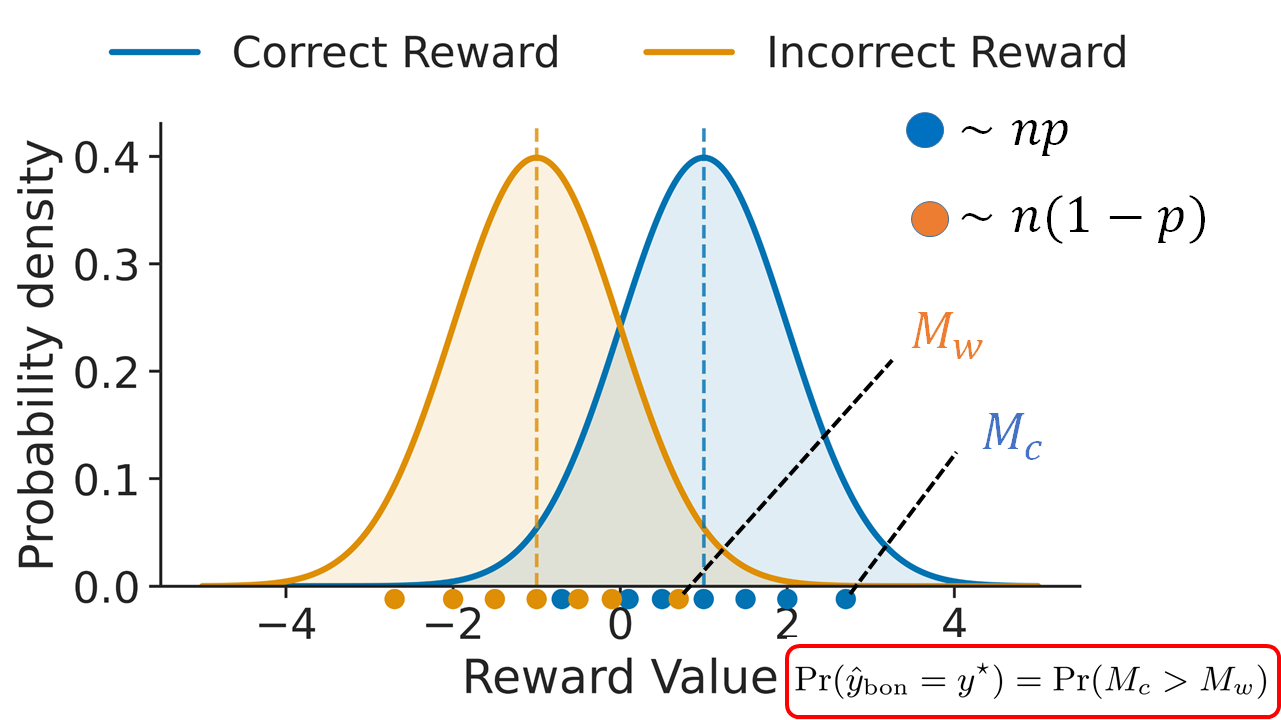}
    \caption{An illustrative overview of our explanation of BoN for verifiable problems. When \(n\) responses are sampled, assuming \(\mathrm{pass@1}=p\), we obtain approximately \(np\) correct responses and \(n(1-p)\) incorrect responses, each assigned a reward according to the corresponding reward distribution. Whether BoN selects a correct response is determined by the competition between the maximum reward \(M_c\) among the \(np\) correct responses and the maximum reward \(M_w\) among the \(n(1-p)\) incorrect responses. When \(n\) is sufficiently large, these maxima are governed by the tails of the respective distributions. Thus, the performance of BoN depends on the competition between the tails of the reward distributions for correct and incorrect responses.
}
    \label{fig:max_competition}
\end{figure}
In this section, we introduce a framework that views BoN as a competition between the reward maxima of correct and incorrect responses. By approximating the respective reward maxima using extreme value theory, we analyze how BoN accuracy changes as \(n\) grows. As a simple illustrative example, we consider the case where the rewards of correct and incorrect responses follow Gaussian distributions. However, the framework and results apply to almost any continuous distribution and are not limited to the Gaussian case.

\subsection{Extreme Value Theory Approximation}
It is known that the maximum value in a sample set of a random variable, after appropriate normalization, converges in distribution as $n \rightarrow \infty$ to one of three types of extreme value distributions. In this paper, we use this fact to analyze the asymptotic behavior when $n$ is large. To simplify notation, we write $p:=p(x), k_n:=\lfloor np\rfloor, m_n:=n-k_n$.
According to the Fisher--Tippett--Gnedenko theorem~\citep{Frechet1927EcartMaximum,FisherTippett1928LimitingForms,Gnedenko1943DistributionLimite}, if 
\begin{align}
\frac{M(m)-b(m)}{a(m)}
&\xrightarrow[m\to\infty]{} Z,
\end{align}
has non-degenerate limit for the maximum value $M(m)=\max_{x_j \in X} x_j$ of $m$ samples $\{x_i | x_i \sim P\}_{i=1}^m$ with appropriate $b(m)\in\mathbbm{R}, a(m)>0$, $Z$ follows one of three distributions: the Gumbel distribution, the Fréchet distribution, or the Weibull distribution depending on $P$. Light-tailed distributions whose tails decay exponentially, such as the gaussian distribution, converge to the Gumbel distribution; heavy-tailed distributions, such as the Pareto distribution, converge to the Fréchet distribution; and distributions with bounded random variables, such as the beta distribution, converge to the Weibull distribution. Therefore, for algorithms that depend only on the maximum value, such as BoN sampling, most distributions can be covered by examining the above three types of distributions, regardless of what distribution the reward distribution follows.

\paragraph{Example: Gaussian distribution.} When a random variable $X$ follows a gaussian distribution $X \sim\mathcal{N}(\mu, \sigma^2)$ with mean $\mu$ and variance $\sigma^2$,
\begin{eqnarray}
a(m) = \frac{\sigma}{\sqrt{2\log m} - \frac{\log\log m+\log(4\pi)}{2\sqrt{2\log m}}}, \hspace{3mm} b(m) = \mu + \sigma\cdot\left(\sqrt{2\log m} - \frac{\log\log m+\log(4\pi)}{2\sqrt{2\log m}}\right),
\label{eq:gaussian a_n, b_n}
\end{eqnarray}
then the scaled maximum value $\frac{M(m)-b(m)}{a(m)}$ of $m$ samples drawn from the gaussian distribution using the above $a(m), b(m)$ converges in distribution to the standard Gumbel distribution as $m\rightarrow \infty$.

Based on the Fisher--Tippett--Gnedenko theorem, this paper appropriately chooses $a_c(m;x)>0,b_c(m;x)$ and $a_w(m;x)>0,b_w(m;x)$ for correct and incorrect answers, respectively, and approximates
\begin{align}\label{eq:EVT approximation}
\frac{M_c(k_n)-b_c(k_n;x)}{a_c(k_n;x)}
&\approx Z_c,
&
\frac{M_w(m_n)-b_w(m_n;x)}{a_w(m_n;x)}
&\approx Z_w,
\end{align}
as above.

\subsection{Classification of BoN Behaviors}
In this section, we show that the correctness probability of BoN sampling converges, for general reward distributions, to three regimes as $n\rightarrow\infty$, together with the corresponding conditions, and how these regimes are realized when specific reward distributions are assumed.

First, to simplify notation, define
\[
\alpha_n(x):=\frac{a_c(k_n;x)}{a_w(m_n;x)},
\qquad
\beta_n(x):=\frac{b_c(k_n;x)-b_w(m_n;x)}{a_w(m_n;x)}.
\]
\begin{theorem}[Behavior for infinite $n$]
\label{theorem:generalized_three_regimes}
In the typical region $k_n=\lfloor np\rfloor$, the extreme value approximation gives
\[
A_n(k_n;x) = \Pr(M_c(k_n)>M_w(m_n))
\approx
\Pr\!\bigl(\alpha_n(x)Z_c-Z_w>-\beta_n(x)\bigr).
\]
Then the asymptotic behavior of $A_n(k_n;x)$ is classified into the following three regimes.

\textbf{(i) Correct-side extreme-value-dominant phase.}
If any of the following holds:
(a) $\alpha_n\to\alpha\in(0,\infty)$ and $\beta_n\to+\infty$,
(b) $\alpha_n\to\infty$ and $\beta_n/\alpha_n\to+\infty$,
or
(c) $\alpha_n\to0$ and $\beta_n\to+\infty$,
then $A_n(k_n;x)\to1$.

\textbf{(ii) Incorrect-side extreme-value-dominant phase.}
If any of the following holds:
(a) $\alpha_n\to\alpha\in(0,\infty)$ and $\beta_n\to-\infty$,
(b) $\alpha_n\to\infty$ and $\beta_n/\alpha_n\to-\infty$,
or
(c) $\alpha_n\to0$ and $\beta_n\to-\infty$,
then $A_n(k_n;x)\to0$.

\textbf{(iii) Critical phase.}
If
(a) $\alpha_n\to\alpha\in(0,\infty)$ and $\beta_n\to\beta\in\mathbb{R}$, then
$A_n(k_n;x)\to \Pr(\alpha Z_c-Z_w>-\beta)$;
(b) $\alpha_n\to\infty$ and $\beta_n/\alpha_n\to\tau\in\mathbb{R}$, then
$A_n(k_n;x)\to \Pr(Z_c > -\tau)$;
or
(c) $\alpha_n\to0$ and $\beta_n\to\beta\in\mathbb{R}$,
then $A_n(k_n;x)\to Pr(Z_w < \beta)$.
\end{theorem}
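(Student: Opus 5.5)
We take the extreme value approximation at face value and make it precise by assuming that the two normalized maxima converge jointly in distribution:
\[
\Bigl(\tfrac{M_c(k_n)-b_c(k_n;x)}{a_c(k_n;x)},\;\tfrac{M_w(m_n)-b_w(m_n;x)}{a_w(m_n;x)}\Bigr)\Rightarrow (Z_c,Z_w).
\]
Here $Z_c,Z_w$ are independent, because the correct and incorrect samples are independent given $k_n$. Both limits are non-degenerate and continuous, since all three extreme value laws have continuous distribution functions. We write $W_n^c,W_n^w$ for the two normalized maxima.

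The first step is an exact algebraic rewrite. Since $a_w>0$, the event $M_c(k_n)>M_w(m_n)$ is equivalent to
\[
\alpha_n W_n^c - W_n^w > -\beta_n .
\]
This is the displayed identity of the theorem, with $W$ in place of $Z$. The whole proof then reduces to analysing $\Pr(\alpha_n W_n^c - W_n^w > -\beta_n)$ in each regime, using Slutsky's theorem and tightness of $W_n^c,W_n^w$.

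For the case of bounded, nondegenerate $\alpha_n$ (cases (i)(a), (ii)(a), (iii)(a)), write $D_n:=\alpha_n W_n^c-W_n^w$.
\begin{itemize}
\item By the continuous mapping theorem and Slutsky's theorem, $D_n\Rightarrow \alpha Z_c-Z_w$.
\item The limit $\alpha Z_c - Z_w$ is a real random variable, so $D_n$ is tight.
\item If $\beta_n\to+\infty$, then $\Pr(D_n>-\beta_n)\ge 1-\Pr(D_n\le -K)-o(1)$ for every fixed $K$, which gives limit $1$. The case $\beta_n\to-\infty$ gives limit $0$ symmetrically.
\item If $\beta_n\to\beta$, the Portmanteau theorem gives $\Pr(D_n>-\beta_n)\to\Pr(\alpha Z_c-Z_w>-\beta)$. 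This requires $-\beta$ to be a continuity point of the law of $\alpha Z_c-Z_w$. That holds because the law of a sum of independent variables, one of which has a continuous distribution, is itself continuous.
\end{itemize}

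For the case $\alpha_n\to\infty$, divide the inequality by $\alpha_n$ to get
\[
W_n^c - W_n^w/\alpha_n > -\beta_n/\alpha_n .
\]
Since $W_n^w$ is tight, $W_n^w/\alpha_n\to 0$ in probability. Hence the left side converges in distribution to $Z_c$ by Slutsky's theorem, and the same tightness and continuity-point argument gives limit $1$, $0$, or $\Pr(Z_c>-\tau)$. For the case $\alpha_n\to0$, the term $\alpha_n W_n^c\to0$ in probability, so $D_n\Rightarrow -Z_w$. The event is then $\{W_n^w<\beta_n+o_p(1)\}$, giving limit $1$, $0$, or $\Pr(Z_w<\beta)$.

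The main obstacle is not these limit manipulations but stating precisely what "$\approx$" means. The convergence is along the two sequences $k_n\to\infty$ and $m_n\to\infty$, both guaranteed by $0<p<1$, and we must justify joint convergence from marginal convergence; this follows from independence. We must also handle the continuity-point issue in the critical cases. I would state the result as exact limits of $\Pr(M_c(k_n)>M_w(m_n))$ under the joint convergence hypothesis above, noting that the Fisher--Tippett--Gnedenko normalizations supply exactly this hypothesis.
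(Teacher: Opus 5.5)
Your proof is correct. It uses the same case split as the paper, including dividing by $\alpha_n$ when $\alpha_n\to\infty$, but it proves a stronger statement.

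The paper's appendix treats the extreme value approximation as an identity. It fixes the limit variables $Z_c,Z_w$, sets $W_n:=\alpha_n Z_c-Z_w+\beta_n$, reads off the pointwise limit of $W_n$ (or of $W_n/\alpha_n$) in each regime, and tabulates $\lim\Pr(W_n>0)$. This only gives the limits of the surrogate $\Pr(\alpha_n Z_c-Z_w>-\beta_n)$, not of $A_n(k_n;x)$ itself. It also leaves implicit the step from pointwise limits to probabilities, which is dominated convergence of indicators. Nor does it note that the critical cases need the limit law to put no mass at the threshold, e.g.\ $\Pr(\alpha Z_c-Z_w=-\beta)=0$.

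You instead apply the exact algebraic rewrite to the pre-limit normalized maxima $W_n^c,W_n^w$. You obtain joint weak convergence from the independence of correct and incorrect samples. You then use Slutsky, tightness, and the atomlessness of the limit laws to control the continuity points. This establishes the three regimes for the true $A_n(k_n;x)=\Pr(M_c(k_n)>M_w(m_n))$ using only the Fisher--Tippett--Gnedenko normalizations, so the classification no longer depends on the informal ``$\approx$'' in the statement.

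The paper's route is shorter and purely computational. Yours removes the heuristic step at the cost of a few standard weak-convergence lemmas.
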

See Appendix \ref{proof:generalized_three_regimes} for the proof. 

\paragraph{Example: Gaussian distribution.} Assume that the rewards for correct and incorrect answers follow gaussian distributions with means $\mu_c, \mu_w$ and variances $\sigma_c^2, \sigma_w^2$, respectively. In this case, by taking $a_n, b_n$ as in Eq.~\ref{eq:gaussian a_n, b_n}, the asymptotic behavior as $n \rightarrow \infty$ is, by Theorem~\ref{theorem:generalized_three_regimes},
\begin{itemize}
    \item when $\sigma_c > \sigma_w$, or when $\sigma_c = \sigma_w, \mu_c > \mu_w$, the correct-side extreme-value-dominant phase,
    \item when $\sigma_c < \sigma_w$, or when $\sigma_c = \sigma_w, \mu_c < \mu_w$, the incorrect-side extreme-value-dominant phase,
    \item when $\sigma_c = \sigma_w, \mu_c = \mu_w$, the critical phase.
\end{itemize}
\begin{proof}
First, $\alpha_n \rightarrow \frac{\sigma_c}{\sigma_w} < \infty$, so it converges to a finite value. Since the largest dominant term of $\beta_n$ is $\frac{\sigma_c-\sigma_w}{\sigma_w}\cdot 2\log{n}$, the phase becomes the correct-side extreme-value-dominant phase when the sign of $\sigma_c-\sigma_w$ is positive, and the incorrect-side extreme-value-dominant phase when it is negative. When it is 0, $\frac{\mu_c-\mu_w}{\sigma_w}\sqrt{2\log n}$ becomes the dominant term. Therefore, if $\mu_c > \mu_w$, the phase becomes the correct-side extreme-value-dominant phase; if $\mu_c < \mu_w$, the phase becomes the incorrect-side extreme-value-dominant phase; and only when $\mu_c = \mu_w$, that is, when the reward distributions for correct and incorrect answers are exactly the same, does it converge to the critical phase.    
\end{proof}
In the case of the gaussian distribution, as $n \rightarrow \infty$, the maximum reward increases according to $b(m)$, namely $\sigma\sqrt{2\log{m}}$, and values in the right tail of the reward distributions for correct and incorrect answers are obtained; the side with the thicker right tail dominates. This means that even if the mean reward for correct answers is higher than that for incorrect answers, if the variance of incorrect answers is higher, the accuracy converges to 0 as $n\rightarrow\infty$. Thus, it is important for the success of BoN that the right tail of the distribution is properly optimized. Moreover, since the critical phase holds only when the correct and incorrect answers have exactly the same distribution, it is considered to occur very rarely in practice.

\paragraph{Four phases of $n$-dependence.}
The behavior of BoN as a function of $n$ is determined by two factors:
the initial direction from $n=1$ to $n=2$, and the asymptotic tail-dominant
side as $n\to\infty$. The former is governed by
$\Pr(R_c>R_w\mid x)$, while the latter is determined by Theorem~1.
This yields the four phases summarized in Table~\ref{tab:four-phases}.

\begin{table}[t]
\centering
\caption{Four phases of BoN $n$-dependence. The initial trend is determined by
$\Pr(R_c>R_w\mid x)$, while the asymptotic behavior is determined by the
tail-dominant side in Theorem~1.}
\label{tab:four-phases}
\begin{tabular}{c c c c}
\toprule
Phase & Initial trend & Asymptotic limit & Typical behavior \\
\midrule
iA  & $S_2>S_1$ & $S_n\to 1$ & monotonic improvement \\
iB  & $S_2<S_1$ & $S_n\to 1$ & delayed improvement (e.g. U-shape) \\
iiA & $S_2>S_1$ & $S_n\to 0$ & ``reward hacking'' (e.g. inverted U-shape) \\
iiB & $S_2<S_1$ & $S_n\to 0$ & monotonic degradation \\
\bottomrule
\end{tabular}
\end{table}
Here, $S_2>S_1$ is equivalent to $\Pr(R_c>R_w\mid x)>1/2$.
Thus, small-$n$ improvement is controlled by pairwise correct--incorrect
comparison, whereas large-$n$ behavior is controlled by the extreme upper
tails. The most important non-monotonic case is phase iiA: BoN initially
improves, but eventually degrades because the incorrect-answer tail dominates.
We refer to this regime as reward hacking.


\paragraph{Example: Gaussian distribution.}

\begin{figure}[t]
    \centering

    \noindent
    \makebox[\linewidth]{%
        \hrulefill
        \hspace{0.8em}
        \textbf{Reward Model with Monotonic Improvement}
        \hspace{0.8em}
        \hrulefill
    }

    \vspace{0.8em}

    \begin{minipage}{0.48\linewidth}
        \centering
        \textbf{Small $n$}

        \vspace{0.4em}

        \includegraphics[width=\linewidth]{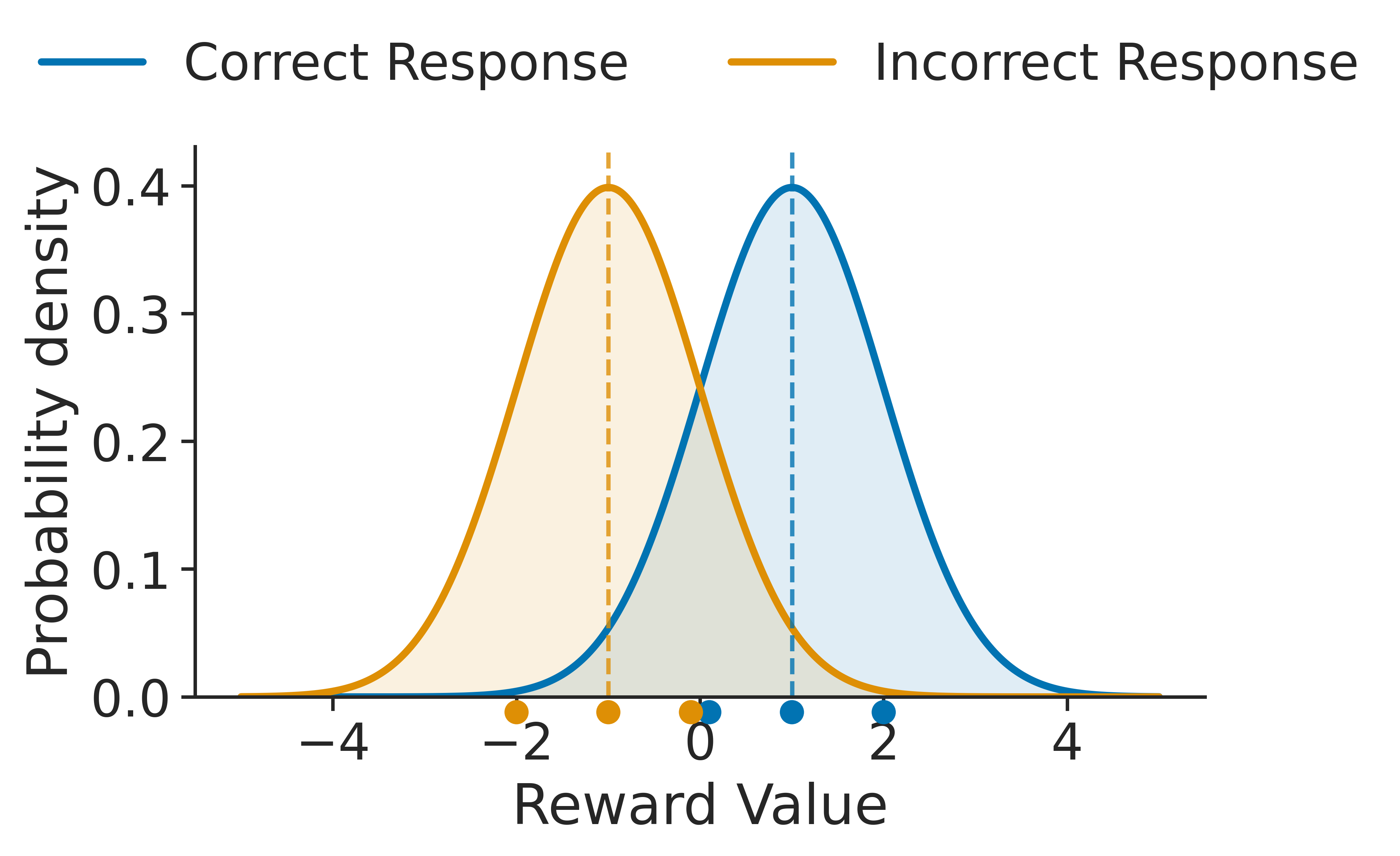}
    \end{minipage}
    \hfill
    \begin{minipage}{0.48\linewidth}
        \centering
        \textbf{Large $n$}

        \vspace{0.4em}

        \includegraphics[width=\linewidth]{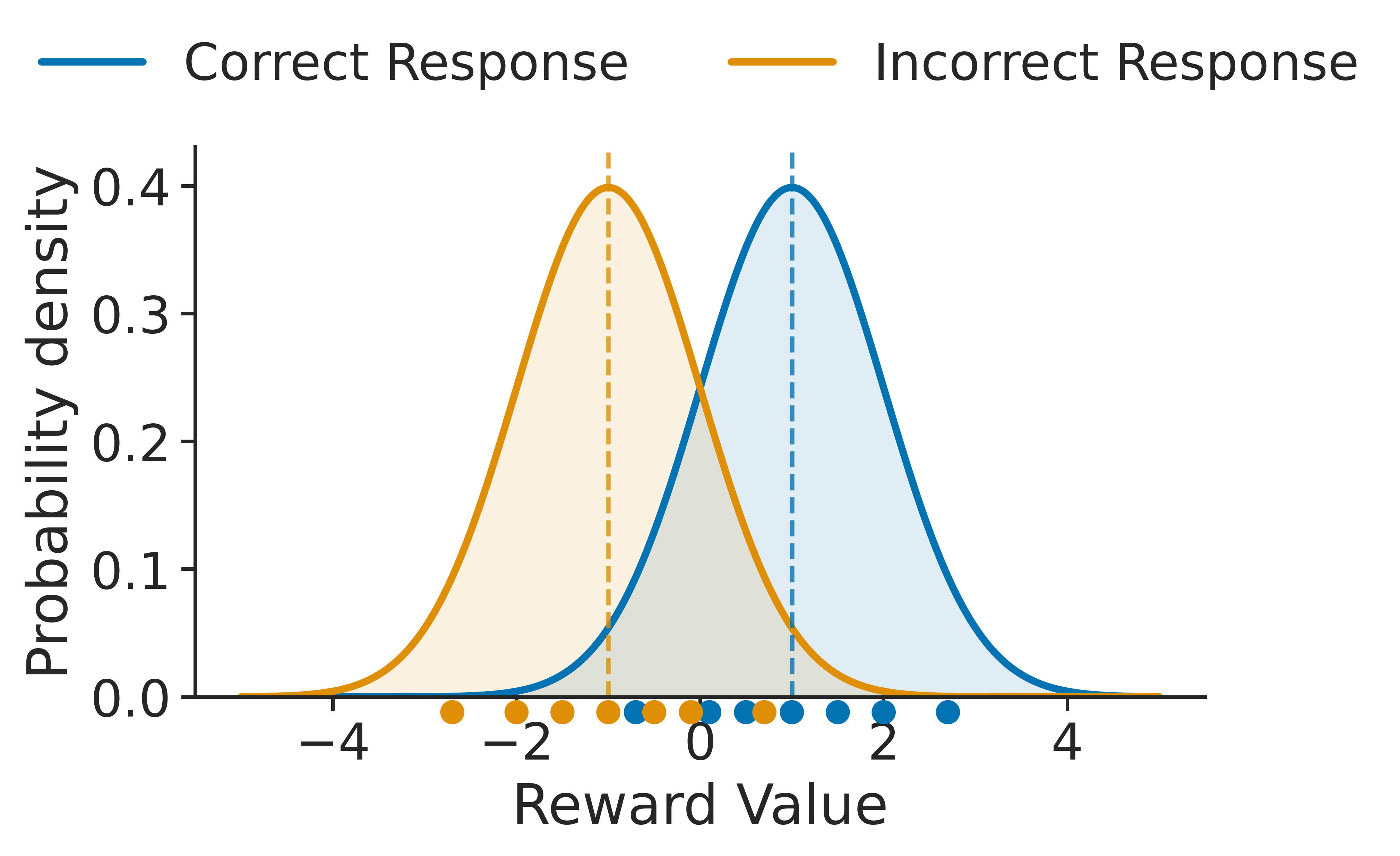}
    \end{minipage}

    \vspace{1.2em}

    \noindent
    \makebox[\linewidth]{%
        \hrulefill
        \hspace{0.8em}
        \textbf{Reward Model with Reward Hacking}
        \hspace{0.8em}
        \hrulefill
    }

    \vspace{0.8em}

    \begin{minipage}{0.48\linewidth}
        \centering
        \textbf{Small $n$}

        \vspace{0.4em}

        \includegraphics[width=\linewidth]{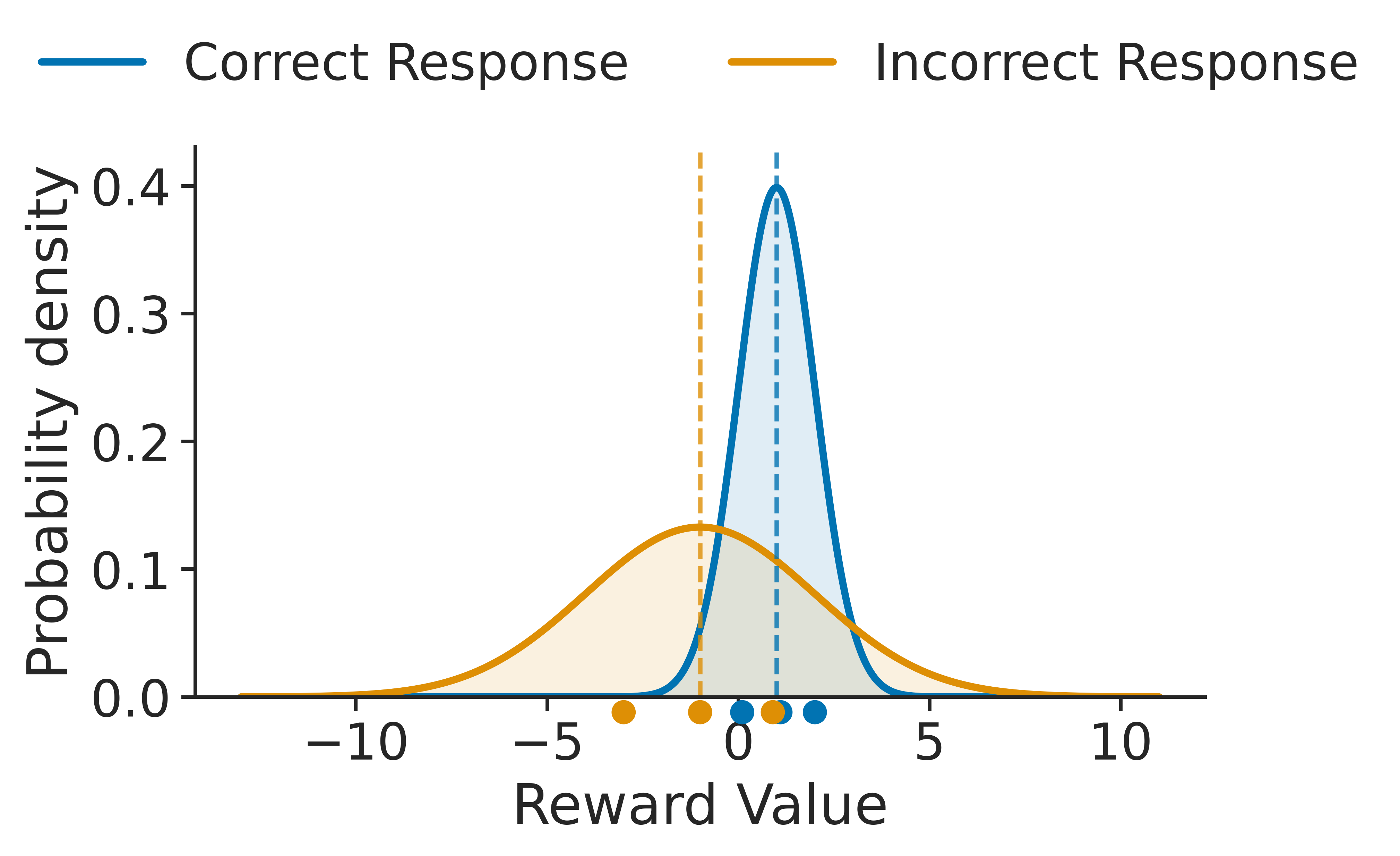}
    \end{minipage}
    \hfill
    \begin{minipage}{0.48\linewidth}
        \centering
        \textbf{Large $n$}

        \vspace{0.4em}

        \includegraphics[width=\linewidth]{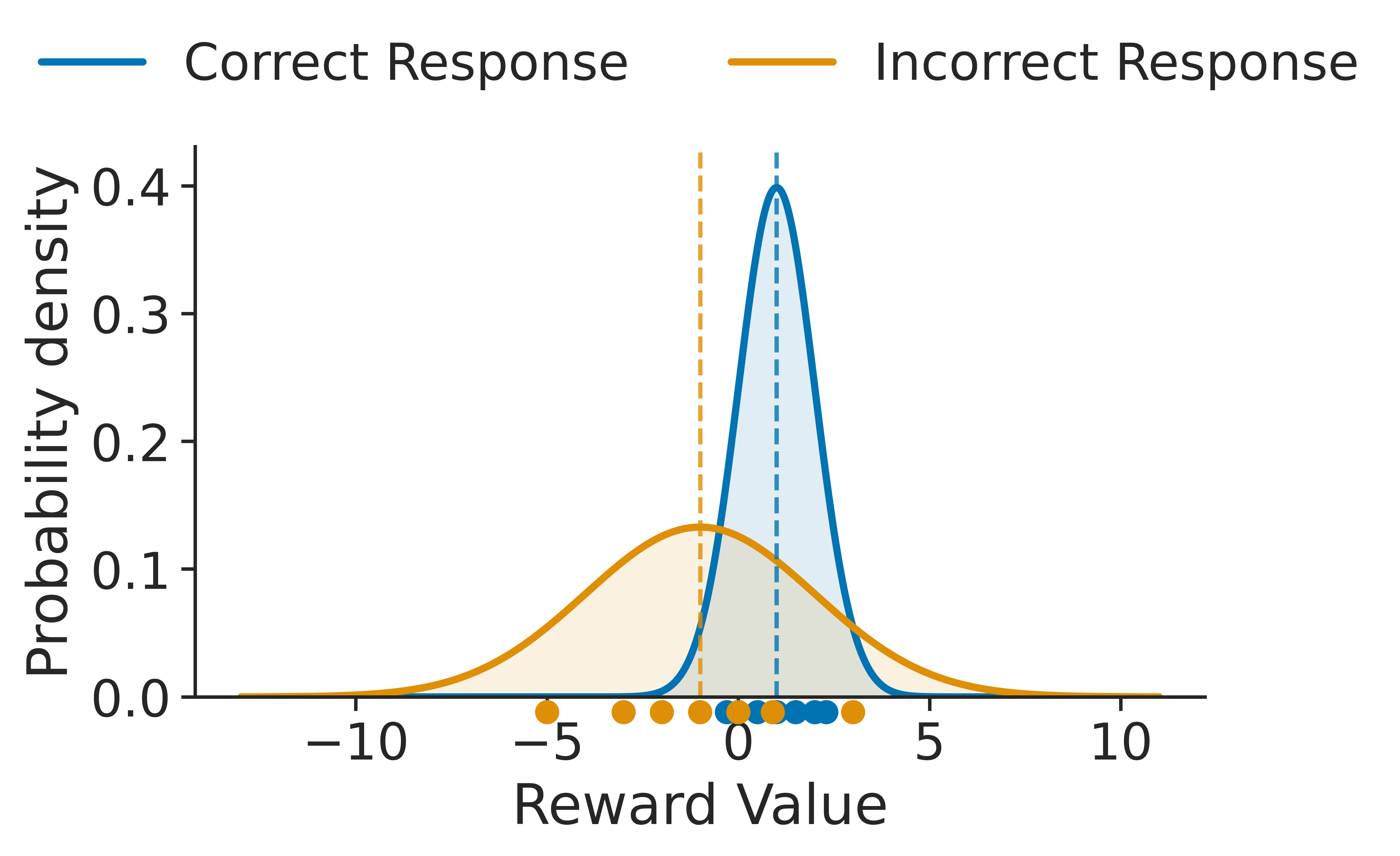}
    \end{minipage}

    \caption{An illustrative explanation of reward hacking in verifiable problems. 
    The blue and orange points represent correct and incorrect sampled responses, respectively, with $n$ points in total. The response outlined in red is the one selected by BoN. Thus, if a blue response is outlined in red, BoN successfully selects a correct response. In the monotonic-improvement case (top row), correct and incorrect responses have reward distributions with the same variance, so their maximum rewards grow at similar rates as $n$ increases. In the reward-hacking case (bottom row), incorrect responses have larger reward variance, causing their maximum reward to grow faster and eventually exceed that of correct responses. Thus, BoN selects a correct response at small $n$ but incorrect one at large $n$.}
    \label{fig:monotonic_reward_hacking}
\end{figure}

For $n=2$, the condition under which BoN accuracy improves is
$
\Pr(R_c>R_w\mid x)
=
\Phi\!\left(
\frac{\mu_c-\mu_w}{\sqrt{\sigma_c^2+\sigma_w^2}}
\right)
$,
which implies $\mu_c>\mu_w$, where $\Phi$ is the cumulative distribution function of gaussian distribution.
Moreover, the incorrect side is upper-tail dominant if and only if $\sigma_w>\sigma_c$.
Thus, the Gaussian reward-hacking regime is
\begin{align}
\mu_c>\mu_w
\qquad\text{and}\qquad
\sigma_w>\sigma_c .
\label{eq:gaussian_reward_hacking_condition}
\end{align}
An illustration is shown in figure~\ref{fig:monotonic_reward_hacking}.

In this regime, the characteristic turnover scale is obtained by equating the
first-order extreme-value approximations of the correct and incorrect maxima: $\mu_c+\sigma_c\sqrt{2\log n}\approx\mu_w+\sigma_w\sqrt{2\log n}$.
Thus, we define the turn-over $n$ as follows.
\begin{align}
n^{\star}(x)
:=
\exp\!\left(
\frac{(\mu_c-\mu_w)^2}
{2(\sigma_w-\sigma_c)^2}
\right),
\label{eq:gaussian_ev_scale}
\end{align}
where, $n^{\star}(x)$ is a leading-order extreme-value crossover scale: it indicates where the incorrect maximum reward catches up with the correct maximum reward, rather than the exact optimizer of $S_n(x)$.

\section{Scaling Law}
In this section, in the correct-side extreme-value-dominant or critical phase, we analyze how the speed of asymptotic convergence to accuracy 1, that is, the $n$-scaling, differs between the oracle selection function and an incomplete selection function. In general, the scaling behavior changes depending on the reward distribution. This section describes the universal approximate behavior for three domains of attraction: the Gumbel distribution, the Fréchet distribution, and the Weibull distribution.

To derive the scaling law for finite $n$, we first evaluate the error due to the EVT approximation. Given the CDF errors $\epsilon_c(n), \epsilon_w(n)$ induced by the EVT approximations of the reward distributions for correct and incorrect answers, the following lemma holds for the EVT approximation of $A_n$, defined as $\tilde{A}_n(k_n):= \Pr(a_c(k_n)Z_c(k_n) + b_c > a_w(m_n)Z_w(m_n) + b_w)$.
\begin{lemma}[EVT approximation error bound]
\label{prop:evt_approximation_error_bound}
When the CDF errors are given by
\begin{eqnarray}
\epsilon_c(n)&:=&\sup_z |\Pr(M_c(k_n)\leq z) - \Pr(a_c(k_n)Z_c(k_n) + b_c\leq z)|,\\
\epsilon_w(n)&:=&\sup_z |\Pr(M_w(m_n)\leq z) - \Pr(a_w(m_n)Z_w(m_n) + b_w\leq z)|,
\end{eqnarray}
we have
\begin{eqnarray}
|A_n(k_n) - \tilde{A}_n(k_n)| \leq \epsilon_c(n) + \epsilon_w(n).
\end{eqnarray}
\end{lemma}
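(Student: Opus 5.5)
We compare two pairs of independent random variables by swapping one component at a time. Write $X := M_c(k_n)$ and $Y := M_w(m_n)$, which are independent because the correct and incorrect responses are drawn independently given their labels. Write $\tilde X := a_c(k_n)Z_c(k_n)+b_c$ and $\tilde Y := a_w(m_n)Z_w(m_n)+b_w$, taken independent of each other and of $(X,Y)$. Then
\[
A_n(k_n)=\Pr(X>Y), \qquad \tilde A_n(k_n)=\Pr(\tilde X>\tilde Y).
\]
We introduce the hybrid quantity $\Pr(X>\tilde Y)$ and apply the triangle inequality:
\[
|A_n(k_n)-\tilde A_n(k_n)| \le |\Pr(X>Y)-\Pr(X>\tilde Y)| + |\Pr(X>\tilde Y)-\Pr(\tilde X>\tilde Y)|.
\]
The rest of the proof bounds the first term by $\epsilon_w(n)$ and the second by $\epsilon_c(n)$.

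\textbf{First term.} Conditioning on $X$ and using independence gives
\[
\Pr(X>Y)-\Pr(X>\tilde Y)=\mathbb{E}_X\bigl[F_Y(X^-)-F_{\tilde Y}(X^-)\bigr],
\]
where $F(x^-)=\Pr(\cdot<x)$. The definition of $\epsilon_w(n)$ controls $\sup_z|F_Y(z)-F_{\tilde Y}(z)|$, and left limits of CDFs inherit the same sup bound by taking $z\uparrow x$. Hence this term is at most $\epsilon_w(n)$. Continuity of $F_w$ makes $Y$ continuous, so ties are not an issue. For $\tilde Y$, the Gumbel, Fréchet and Weibull limits are continuous, but the left-limit argument covers the general case regardless.

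\textbf{Second term.} Condition instead on $\tilde Y$:
\[
\Pr(X>\tilde Y)-\Pr(\tilde X>\tilde Y)=\mathbb{E}_{\tilde Y}\bigl[F_{\tilde X}(\tilde Y)-F_X(\tilde Y)\bigr],
\]
whose absolute value is at most $\sup_z|F_X(z)-F_{\tilde X}(z)|=\epsilon_c(n)$.

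\textbf{Conclusion and obstacle.} Summing the two bounds gives the claim. The only delicate point is the strict versus non-strict inequality, that is, handling $\Pr(\cdot<z)$ rather than $\Pr(\cdot\le z)$ inside the expectations. This is resolved by approximating $z$ from below and using that the supremum bound on CDF differences passes to left limits. A secondary point to state explicitly is that $\tilde A_n$ is defined with $\tilde X$ and $\tilde Y$ independent, mirroring the independence of $M_c$ and $M_w$.
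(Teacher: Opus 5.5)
Your proof is correct and follows the paper's argument: it uses the same hybrid $\Pr(M_c(k_n)>\tilde M_{w,n})$, bounds the $M_w\to\tilde M_{w,n}$ swap by $\epsilon_w(n)$ after conditioning on $M_c$, and bounds the $M_c\to\tilde M_{c,n}$ swap by $\epsilon_c(n)$. The paper does the second swap through the quantile representation $\mathbb{E}[\tilde F_{w,n}(X)]=\int_0^1\Pr(X>\tilde F_{w,n}^{-1}(u))\,du$, which is your conditioning on $\tilde Y$ written via $\tilde Y=\tilde F_{w,n}^{-1}(U)$; your left-limit remark also makes explicit the tie-handling that the paper leaves to continuity.
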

The proof is provided in Appendix~\ref{appendix:evt_approximation_error_bound}. We will derive the scaling law of $\tilde{A}_n$ in the rest of this section.

\begin{theorem}[Finite-$n$ scaling in the Gumbel domain of attraction]\label{theorem:gumbel scaling}
Assume that the reward distribution belongs to the Gumbel domain of attraction and that $\alpha_n \rightarrow \alpha < \infty$ and $\beta_n \rightarrow \infty$ hold. Then, it falls into the correct-side extreme-value-dominant phase and 
\begin{eqnarray}
    \tilde{A}_n(k_n; x) = 1 - \Gamma(1 + \alpha)\mathrm{e}^{-\beta_n} + o(\mathrm{e}^{-\beta_n}),
\end{eqnarray}
holds.
\end{theorem}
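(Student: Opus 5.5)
The plan is to compute the failure probability $1-\tilde{A}_n(k_n;x)$ exactly in the Gumbel limit, conditioning on the correct-side extreme. Work with the normalized form from Theorem~\ref{theorem:generalized_three_regimes}. Here $Z_c,Z_w$ are independent standard Gumbel variables, with CDF $G(t)=\exp(-e^{-t})$, and
\[
1-\tilde{A}_n(k_n;x)=\Pr\bigl(Z_w>\alpha_n Z_c+\beta_n\bigr).
\]
Membership in phase (i)(a) is immediate from Theorem~\ref{theorem:generalized_three_regimes}, since $\alpha_n\to\alpha\in(0,\infty)$ and $\beta_n\to+\infty$. (If $\alpha=0$ is allowed, the same computation below still goes through with $\Gamma(1)=1$.) What remains is the rate.

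\textbf{Step 1: condition on $Z_c$.} Conditioning on $Z_c=z$ and using independence gives
\[
1-\tilde{A}_n=\mathbb{E}\bigl[1-\exp\!\bigl(-e^{-\alpha_n Z_c-\beta_n}\bigr)\bigr].
\]
Set $u=e^{-\alpha_n Z_c-\beta_n}\ge 0$. The elementary inequalities $u-u^2/2\le 1-e^{-u}\le u$ then sandwich the failure probability between $\mathbb{E}[u]-\tfrac12\mathbb{E}[u^2]$ and $\mathbb{E}[u]$.

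\textbf{Step 2: Gumbel Laplace transform.} For a standard Gumbel $Z$ and $s>-1$ we have $\mathbb{E}[e^{-sZ}]=\Gamma(1+s)$. This follows from the substitution $v=e^{-z}$, which turns the density $e^{-z}\exp(-e^{-z})$ into the Gamma integral $\int_0^\infty v^{s}e^{-v}\,dv$. Hence
\[
\mathbb{E}[u]=\Gamma(1+\alpha_n)e^{-\beta_n},\qquad \mathbb{E}[u^2]=\Gamma(1+2\alpha_n)e^{-2\beta_n}.
\]
Since $\alpha_n\to\alpha<\infty$, both Gamma factors stay bounded. Because $\beta_n\to\infty$, the second-order term is $O(e^{-2\beta_n})=o(e^{-\beta_n})$.

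\textbf{Step 3: replace $\alpha_n$ by $\alpha$.} By continuity of $\Gamma$ on $(0,\infty)$, $\Gamma(1+\alpha_n)=\Gamma(1+\alpha)+o(1)$. Therefore $\Gamma(1+\alpha_n)e^{-\beta_n}=\Gamma(1+\alpha)e^{-\beta_n}+o(e^{-\beta_n})$. Combining Steps 1--3 yields
\[
\tilde{A}_n(k_n;x)=1-\Gamma(1+\alpha)e^{-\beta_n}+o(e^{-\beta_n}).
\]

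\textbf{Main obstacle.} The only delicate point is interpretive rather than computational. $\tilde{A}_n$ is defined with the finite-$n$ normalized variables $Z_c(k_n),Z_w(m_n)$, whereas the computation above uses exact Gumbel laws. I would take $\tilde{A}_n$ to mean the probability under the limiting Gumbel variables, as in the approximation of Theorem~\ref{theorem:generalized_three_regimes}. Any discrepancy at finite $n$ is then exactly what Lemma~\ref{prop:evt_approximation_error_bound} charges to $\epsilon_c(n)+\epsilon_w(n)$. I would state this convention explicitly before running Steps 1--3.
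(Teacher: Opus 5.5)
Your proof is correct, and it differs from the paper's mainly in how the limit is controlled. Both arguments condition on $Z_c$, use $1-e^{-u}\le u$, and reduce to the Gumbel Laplace transform $\mathbb{E}[e^{-sZ}]=\Gamma(1+s)$.

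The paper rescales by $e^{\beta_n}$ and applies dominated convergence to $H_n(y)=e^{\beta_n}\bigl(1-\exp\{-e^{-(\alpha_n y+\beta_n)}\}\bigr)$. The pointwise limit is $e^{-\alpha y}$, and for a bound $\alpha_n\le R$ the dominating function is $1+e^{-Ry}$. This yields only a qualitative $o(e^{-\beta_n})$ remainder.

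Your sandwich $u-u^2/2\le 1-e^{-u}\le u$, combined with exact moments, avoids the limit interchange. It gives a non-asymptotic bound
\[
\bigl|1-\tilde{A}_n(k_n;x)-\Gamma(1+\alpha_n)e^{-\beta_n}\bigr|\le\tfrac12\,\Gamma(1+2\alpha_n)\,e^{-2\beta_n}.
\]
So the only part of the remainder that is not $O(e^{-2\beta_n})$ is $\bigl(\Gamma(1+\alpha_n)-\Gamma(1+\alpha)\bigr)e^{-\beta_n}$. This is the informative form when $\alpha_n$ converges slowly, as in the Gaussian example.

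The price is that you need the exponential moment of $-Z_c$ at $2\alpha_n$ rather than just beyond $\alpha$. This costs nothing here, because the Gumbel left tail is doubly exponentially light. The paper's dominated-convergence route, by contrast, needs only $\mathbb{E}[e^{-RZ_c}]<\infty$ for some $R>\alpha$.

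Your convention that $\tilde{A}_n$ is computed under the exact Gumbel limits is the one the paper's proof also uses. One small slip: for $\alpha=0$ the relevant phase is (i)(c), not (i)(a). In either case $\tilde{A}_n\to1$ follows directly from your expansion.
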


\paragraph{Example: Gaussian distribution.} If the reward follows a Gaussian distribution and belongs to the correct-side extreme-value-dominant phase, then $\alpha_n \rightarrow \frac{\sigma_c}{\sigma_w} < \infty$ and $\beta_n \rightarrow \infty$, and
\begin{eqnarray}
\tilde{A}_n(k_n;x) 
\approx
1 -\Gamma\left(1+\frac{\sigma_c}{\sigma_w}\right)
p^{-\frac{\sigma_c}{\sigma_w}}
(1-p)^{2-\frac{\sigma_c}{\sigma_w}}
n^{-\gamma_n},\label{eq:gaussian scaling law}    
\end{eqnarray}
where
$
\gamma_n
=
2\left(\frac{\sigma_c}{\sigma_w}-1\right)
+
\frac{\mu_c-\mu_w}{\sigma_w}
\sqrt{\frac{2}{\log n}}
-
\left(\frac{\sigma_c}{\sigma_w}-1\right)
\frac{\log(4\pi\log n)}{\log n}$. This shows that, up to logarithmic and sub-polynomial corrections, the convergence is governed by the polynomial rate, which is slower than the exponential asymptotic convergence $1-\mathrm{e}^{-np}$ obtained when using the oracle selection function. On the other hand, it is well known that the convergence of the scaled Gaussian maximum to the Gumbel distribution is slow. Consequently, when the EVT approximation error is taken into account, while the precise scaling law is still polynomial in \(n\), but with a different exponent. Further details are provided in Appendix~\ref{appendix:precise gaussian scaling}.

\begin{theorem}[Finite-$n$ scaling in the Fr\'echet domain of attraction]
\label{theorem:frechet scaling}
Assume that the reward distribution belongs to the Fr\'echet domain of attraction with shape parameter $\rho_c, \rho_w$. Furthermore, we assume that $b_m(x)=0, \alpha_n(x):=\frac{a_c(k_n;x)}{a_w(m_n;x)}\to\infty$ holds. Then,
\begin{align}
\tilde{A}_n(k_n;x)
&=
1-\Gamma\!\Bigl(1+\frac{\rho_w}{\rho_c}\Bigr)\alpha_n(x)^{-\rho_w}
+o(\alpha_n(x)^{-\rho_w}),
\end{align}
holds.
\end{theorem}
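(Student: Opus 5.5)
We reduce $\tilde{A}_n$ to an explicit one-dimensional integral over the incorrect-side limit variable and expand it as $\alpha_n\to\infty$. With $b_c=b_w=0$, the approximation $\tilde{A}_n(k_n;x)=\Pr(a_c(k_n)Z_c>a_w(m_n)Z_w)$ becomes $\Pr(\alpha_n Z_c>Z_w)=\Pr(Z_w<\alpha_n Z_c)$, where $Z_c\sim\mathrm{Fr\acute{e}chet}(\rho_c)$ and $Z_w\sim\mathrm{Fr\acute{e}chet}(\rho_w)$ are independent and supported on $(0,\infty)$. Their CDFs are $G_\rho(z)=\exp(-z^{-\rho})$.

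Conditioning on $Z_c$ gives
\[
1-\tilde{A}_n=\Pr(Z_w\ge \alpha_n Z_c)=\mathbb{E}\bigl[1-\exp\bigl(-\alpha_n^{-\rho_w}Z_c^{-\rho_w}\bigr)\bigr].
\]
Set $t=\alpha_n^{-\rho_w}\to0$. Since $1-e^{-u}\le u$ and $(1-e^{-tu})/t\to u$ pointwise, dominated convergence yields
\[
1-\tilde{A}_n=t\,\mathbb{E}[Z_c^{-\rho_w}]+o(t),
\]
provided $\mathbb{E}[Z_c^{-\rho_w}]<\infty$.

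It remains to compute this negative moment. Because $Z_c^{-\rho_c}$ is a standard exponential variable $E$, we have $Z_c^{-\rho_w}=E^{\rho_w/\rho_c}$. Hence
\[
\mathbb{E}[Z_c^{-\rho_w}]=\int_0^\infty e^{-e}e^{\rho_w/\rho_c}\,de=\Gamma\bigl(1+\rho_w/\rho_c\bigr),
\]
which is finite for all positive shapes. Substituting $t=\alpha_n^{-\rho_w}$ gives the claimed expansion.

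The main point needing care is the justification of the interchange of limit and expectation. The domination $(1-e^{-tu})/t\le u=Z_c^{-\rho_w}$ is integrable by the moment computation above, so dominated convergence applies. The only other subtlety is interpretive: in the statement, $b_m(x)=0$ must mean $b_c=b_w=0$, which is the standard Fréchet normalization. The exact form of the $o(\cdot)$ term is then immediate.
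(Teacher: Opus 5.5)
Your proof is correct and follows the paper's argument essentially step for step: condition on $Z_c$, set $r_n=\alpha_n^{-\rho_w}\to 0$, dominate $(1-e^{-r_n u})/r_n$ by $u=Z_c^{-\rho_w}$, and evaluate $\mathbb{E}[Z_c^{-\rho_w}]=\mathbb{E}[T^{\rho_w/\rho_c}]=\Gamma(1+\rho_w/\rho_c)$ using $T=Z_c^{-\rho_c}\sim\mathrm{Exp}(1)$. One small wording slip: your opening sentence calls the remaining integral one over the incorrect-side variable, but your computation (correctly) integrates $Z_w$ out explicitly and leaves an expectation over the correct-side variable $Z_c$.
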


\begin{theorem}[Finite-$n$ scaling in the Weibull domain of attraction]
\label{theorem:weibull scaling}
Assume that the reward distribution belongs to the Weibull domain of attraction with shape parameter $\xi_c<0, \xi_w<0$ and share a common finite upper endpoint: $r_c^{\max}=r_w^{\max}=:r^{\max}$. Choose the location normalizations as $b_c(k_n)=b_w(m_n)=r_{\max}$ and assume $\alpha_n(x):=\frac{a_c(k_n;x)}{a_n(m_n;x)}\to0$. Then,
\begin{align}
\tilde{A}_n(k_n;x)
&=
1-\Gamma\!\Bigl(1+\frac{\xi_c}{\xi_w}\Bigr)\alpha_n(x)^{-\frac{1}{\xi}}
+o(\alpha_n(x)^{-\frac{1}{\xi}}).
\end{align}
\end{theorem}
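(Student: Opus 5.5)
The plan is to reduce $\tilde{A}_n(k_n;x)$ to a one-dimensional expectation over the independent limiting variables, and then expand it in the small parameter $\alpha_n$.

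\textbf{Setup.} With the common endpoint and $b_c(k_n)=b_w(m_n)=r^{\max}$, the approximating event is
\[
r^{\max}+a_c(k_n)Z_c > r^{\max}+a_w(m_n)Z_w,
\]
which is equivalent to $\alpha_n Z_c > Z_w$. I use the standard (reverse) Weibull limit with shape $\kappa:=-1/\xi>0$:
\[
\Pr(Z\le z)=\exp\bigl(-(-z)^{\kappa}\bigr), \qquad z<0.
\]
Write $\kappa_c=-1/\xi_c$ and $\kappa_w=-1/\xi_w$. Set $U:=-Z_c$ and $V:=-Z_w$. These are positive, with $U^{\kappa_c}$ and $V^{\kappa_w}$ both standard exponential, and $U,V$ independent, since the correct and incorrect maxima are built from disjoint independent samples.

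\textbf{Reduction.} The failure event is $\{V\le \alpha_n U\}$. Conditioning on $U$ gives
\[
1-\tilde{A}_n(k_n;x)=\mathbb{E}\bigl[1-\exp(-(\alpha_n U)^{\kappa_w})\bigr].
\]

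\textbf{Expansion.} Let $g(t):=(1-e^{-t})/t$ for $t>0$. Then
\[
1-\tilde{A}_n=\alpha_n^{\kappa_w}\,\mathbb{E}\bigl[U^{\kappa_w}\,g\bigl((\alpha_nU)^{\kappa_w}\bigr)\bigr].
\]
Since $0<g\le 1$ and $g(t)\to 1$ as $t\to 0$, and since $\alpha_n\to0$ makes $(\alpha_nU)^{\kappa_w}\to 0$ pointwise, dominated convergence applies. The dominating function is $U^{\kappa_w}$, which is integrable because every positive moment of a Weibull variable is finite. This yields
\[
\mathbb{E}[U^{\kappa_w}g(\cdot)]\to \mathbb{E}[U^{\kappa_w}].
\]

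\textbf{Moment computation.} Write $U=E^{1/\kappa_c}$ with $E\sim\mathrm{Exp}(1)$. Then
\[
\mathbb{E}[U^{\kappa_w}]=\mathbb{E}[E^{\kappa_w/\kappa_c}]=\Gamma(1+\kappa_w/\kappa_c)=\Gamma(1+\xi_c/\xi_w).
\]
Therefore
\[
1-\tilde{A}_n=\Gamma(1+\xi_c/\xi_w)\,\alpha_n^{-1/\xi_w}+o(\alpha_n^{-1/\xi_w}).
\]
This is the claimed statement, with $\xi$ read as $\xi_w$ and the denominator of $\alpha_n$ read as $a_w(m_n)$. I will state both readings explicitly, since the statement's $\xi$ and $a_n$ appear to be typos for these.

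\textbf{Main obstacle.} The one step needing care is justifying the interchange of limit and expectation. The bound $g\le1$ settles it. I would also check that the sign conventions for $\xi<0$ match the exponent $-1/\xi>0$, so that $\alpha_n^{-1/\xi_w}\to0$ and the remainder really is of smaller order.
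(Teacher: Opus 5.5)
Your proof is correct and follows the paper's argument essentially step for step. The paper likewise passes to the positive variables $Y=r^{\max}-M$ (your $U,V$), conditions on $Y_c$ using the survival function $\exp(-y^{-1/\xi_w})$, applies dominated convergence with the bound $(1-e^{-u})/u\le 1$, and evaluates the moment via $Y_c^{-1/\xi_c}\sim\mathrm{Exp}(1)$ to obtain $\Gamma(1+\xi_c/\xi_w)$. Your reading of $\xi$ as $\xi_w$ and of $a_n$ as $a_w$ matches what the paper's own proof uses.
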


See section~\ref{appendix:proof:gumbel}, \ref{appendix:proof:frechet}, and \ref{appendix:proof:weibull} for the proofs.

\section{Tail-Tension Early Stopping}
\label{sec:tail_tension_stopping}

Motivated by the tail-competition analysis in Section~\ref{sec: tail-competition},
we propose \emph{tail-tension early stopping}, an online rule that stops BoN
sampling before the planned budget $n$ when additional samples are predicted
to be harmful. After observing the reward scores of the first $t$ candidates, we fit a two-component Gaussian mixture to the per-query score distribution,
\begin{equation}
r \sim \pi_h \mathcal{N}(\mu_h,\sigma_h^2)
      + \pi_l \mathcal{N}(\mu_l,\sigma_l^2),
\end{equation}
where the high-mean component is used as a proxy for correct responses and the
low-mean component as a proxy for incorrect responses:
$(\mu_c,\sigma_c, p)\gets(\mu_h,\sigma_h,\pi_h)$ and
$(\mu_w,\sigma_w, 1-p)\gets(\mu_l,\sigma_l,\pi_l)$. This assignment is only a proxy, and we discuss the potential issues caused by misassignment in Appendix~\ref{appendix:misassignment}. Furthermore, we show that misassignment is rare in practice because the reward model is sufficiently optimized and introduce a conservative guard against potential misassignment.

Early stopping is considered only when the low-mean component has larger
variance, $\sigma_l>\sigma_h$. Under the Gaussian analysis, this corresponds
to $\sigma_w>\sigma_c$, where the incorrect-answer tail can dominate for large
$n$ and cause reward hacking. 
Under this condition, the stopping point is estimated as
\begin{equation}
\hat{n}^\star
\approx
\exp\left(
\frac{(\mu_h-\mu_l)^2}
     {2(\sigma_l-\sigma_h)^2}
\right).
\end{equation}
To account for the dependence on $n$ beyond leading order, we can numerically solve $\mu_h+\sigma_h\sqrt{2\log (n\pi_h)} = \mu_l+\sigma_l\sqrt{2\log (n\pi_l)}$. Since we observed very little change in accuracy, we do not include corrections beyond the leading-order approximation.

Since the extreme-value approximation is unreliable for very small budgets, we
do not stop before a minimum budget $n_{\mathrm{th}}$. The online stopping rule is therefore
\begin{equation}
\tau =
\min\left\{
t\le N:
t\ge n_{\mathrm{th}},\;
\sigma_l>\sigma_h,\;
\hat{n}^\star \le t
\right\},
\end{equation}
with $\tau=n$ if the set is empty. Equivalently, sampling continues until either
the planned budget is reached or the estimated optimal budget has been reached
under the tail-tension condition. The final response is selected from the
generated candidates as $\hat{y}=\arg\max_{1\le i\le \tau} r(x,y_i)$.
The overhead is only a lightweight mixture fit and the computation of
$\hat{n}^\star$. Since the rule is online, it saves both generation and reward-model
inference cost whenever $\tau<n$.
\section{Experiments}\label{sec:experimental results}
In this section, we experimentally examine the following points.
(1) What kind of distribution do real-world reward models follow?
(2) Whether the conditions for the four-phase classification and the predicted convergence destinations are correct
(3) The validity of the scaling law
(4) The effectiveness of Tail-Tension Early Stopping

\paragraph{Setting.}
We use the 10,000-response-per-problem dataset~\footnote{\url{https://huggingface.co/datasets/ScalingIntelligence/monkey_business}}~\citep{DBLP:journals/corr/abs-2407-21787}, which covers 2,878 problem--generator pairs across MATH~\citep{DBLP:conf/nips/HendrycksBKABTS21}, MiniF2F~\citep{DBLP:conf/iclr/ZhengHP22}, GSM8K~\citep{DBLP:journals/corr/abs-2110-14168}, and CodeContests~\citep{DBLP:journals/corr/abs-2203-07814}. In this study, we assigned rewards to the 10,000 generated responses for each of the 2,878 (problem, model) pairs and used them for response selection in BoN sampling. To measure the accuracy of BoN for each $n$, we sampled $n$ responses from the 10,000 responses, simulated this process 1,000 times, and report the average. The reward models used in our experiments are Skywork series~\footnote{\url{https://huggingface.co/collections/Skywork/skywork-reward-v2}}~\citep{DBLP:journals/corr/abs-2507-01352} including Skywork-Reward-V2-Llama-3.1-8B, Skywork-Reward-V2-Qwen3-8B, Skywork-Reward-V2-Qwen3-4B, and Skywork-Reward-V2-Qwen3-1.7B, Llama-3.1-8B-Instruct-RM-RB2~\citep{DBLP:journals/corr/abs-2506-01937}\footnote{\url{https://huggingface.co/allenai/Llama-3.1-8B-Instruct-RM-RB2}}, and RAMO-Llama3.1-8B\footnote{\url{https://huggingface.co/HFXM/RAMO-Llama3.1-8B}}, were selected from top-ranked models in RewardBench 2~\citep{DBLP:journals/corr/abs-2506-01937}. We used vLLM~\citep{kwon2023efficient} for reward model inference, and all experiments were conducted on NVIDIA L40S (48GB) GPUs.

\paragraph{Overall results.} Figure~\ref{fig:reward-vs-oracle} shows representative real-data results. Oracle Pass@$n$ improves rapidly with $n$, whereas BoN with incomplete reward models often improves more slowly, saturates, or degrades. This gap confirms that generation-side scaling, i.e., Pass@$n$ theory alone does not explain BoN performance; selection depends critically on the reward-score tails.

\begin{figure}[htb]
  \centering 

  \begin{minipage}{0.3\linewidth}
    \centering
    \includegraphics[width=\linewidth]{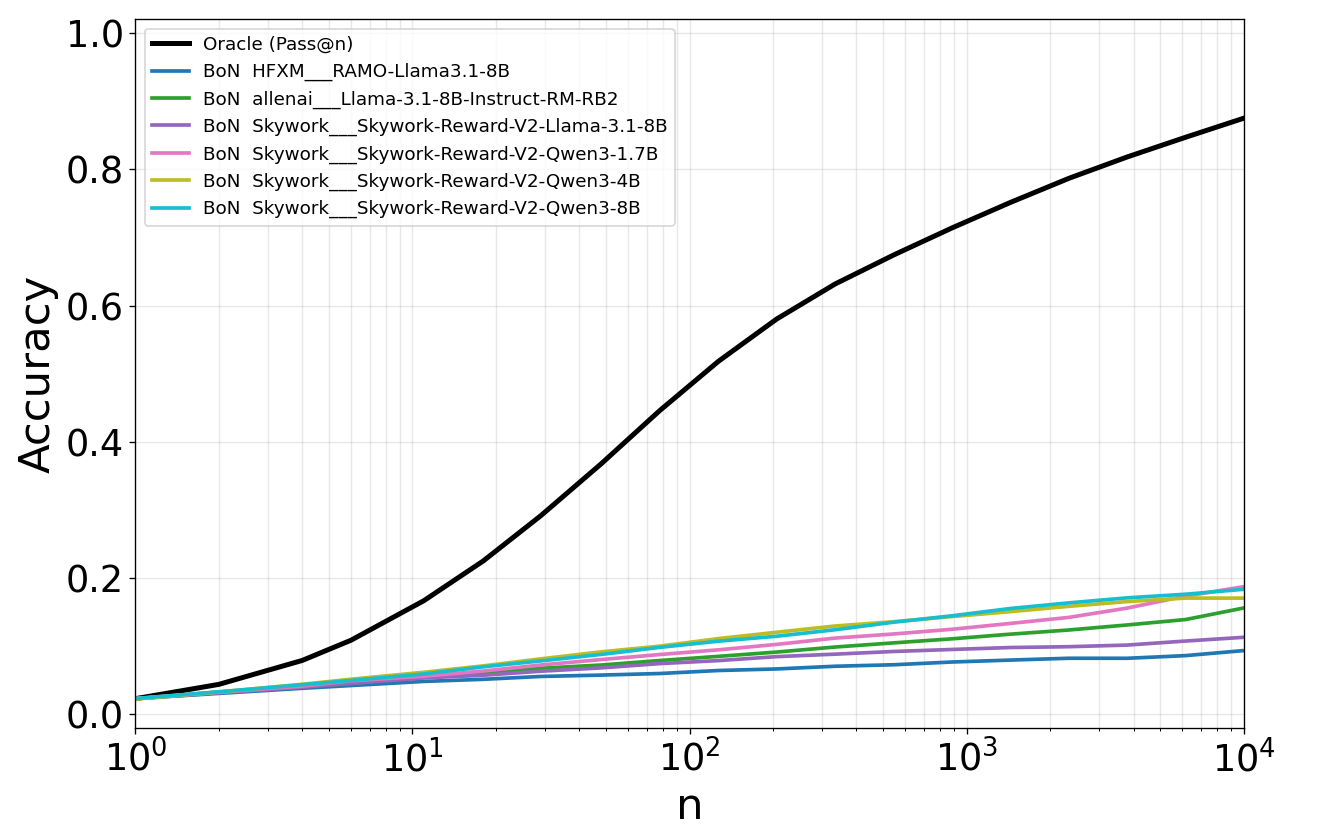}
  \end{minipage}
  \begin{minipage}{0.3\linewidth}
    \centering
    \includegraphics[width=\linewidth]{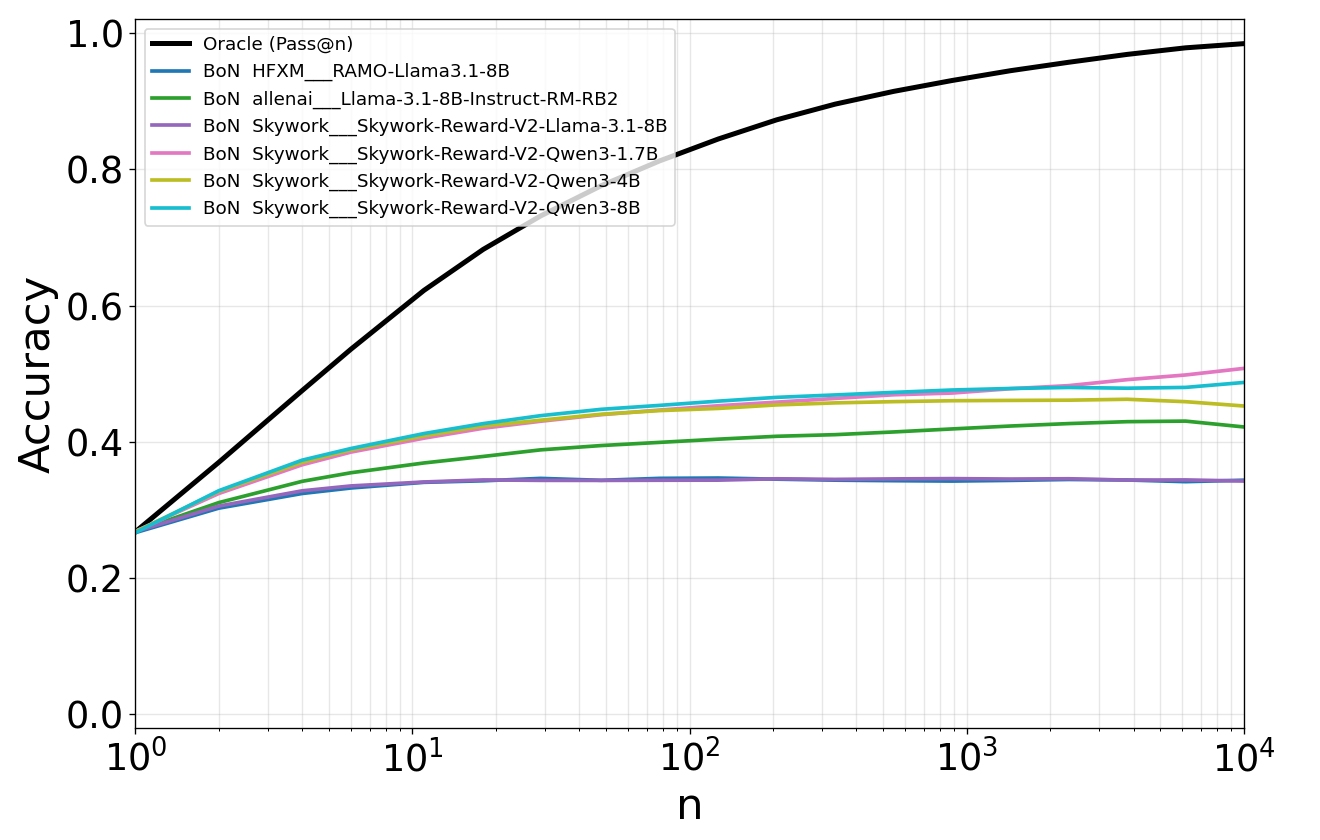}    
  \end{minipage}
  \begin{minipage}{0.3\linewidth}
    \centering
    \includegraphics[width=\linewidth]{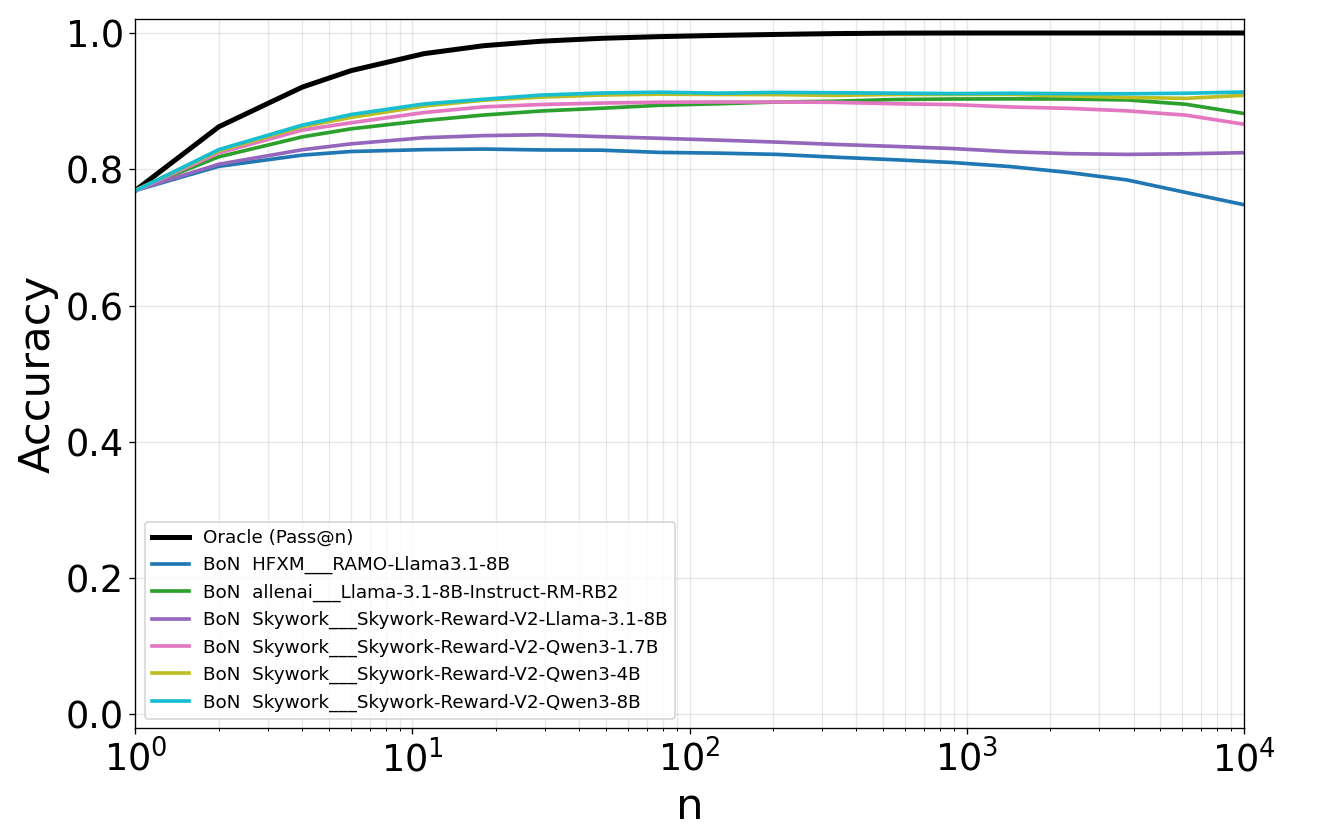}    
  \end{minipage}

  \caption{
    Overall results for MATH with Pythia-12B, MATH with Llama-3-8B-Instruct, and GSM8K with Llama-3-8B-Instruct. The gap between BoN and oracle selection often widens with $n$, and some settings exhibit non-monotonic BoN accuracy.
  }
  \label{fig:reward-vs-oracle}
\end{figure}

\paragraph{What kind of distribution do real-world reward models follow?} Our theory provides a framework applicable to general continuous distributions with max-stability, but in order to confirm that actual reward distributions fall into such domains, we use MATH-Pythia 12B combination and fit the distributions with max-stability using maximum likelihood and evaluate the goodness of fit using the Anderson--Darling test (AD test), log-likelihood, AIC~\citep{akaike1974new}, and BIC~\citep{schwarz1978estimating}. The results are shown in Table~\ref{tab:experiment:1}. From the pass rates of the AD test, we can see that real-world reward distributions can reasonably modeled by distributions with max-stability. 
Table~\ref{tab:experiment:2} summarizes reward-score statistics for correct and incorrect responses. Across settings, correct responses have higher mean scores, while incorrect responses often have larger variance, especially except for Sky-Qwen-4B. This mean--variance pattern suggests tail tension:
correct responses are favored on average, but incorrect responses may dominate the high-reward tail and can be evidence that reward hacking frequently occurs in real-world scenarios.

\paragraph{Verification of theoretical results using synthetic data with Gaussian reward distributions.} To verify the theory in a clean setting, we use synthetic data in which correct and incorrect responses follow Gaussian distributions with different mean and variance parameters, and pass@1 is assumed to be $p=[0.05, 0.1, 0.3, 0.5]$. 
Figure~\ref{fig:four phase} shows whether the four-phase classification and the theoretical $n^{\star}$ agree.
The four panels from left to right correspond to $(\mu_c-\mu_w, \sigma_c^2 - \sigma_w^2) \rightarrow (+, +), (-, -), (+, -), (-, +)$, respectively, satisfying the theoretical requirements of phase iA, phase iiB, phase iiA (reward hacking), and phase iB, respectively, and we confirm that they indeed match the expected behavior. We also see that the theoretically predicted $n^{\star}$ agrees well.
\begin{figure}[htb]
  \centering 

  \begin{minipage}{0.24\linewidth}
    \centering
    \includegraphics[width=\linewidth]{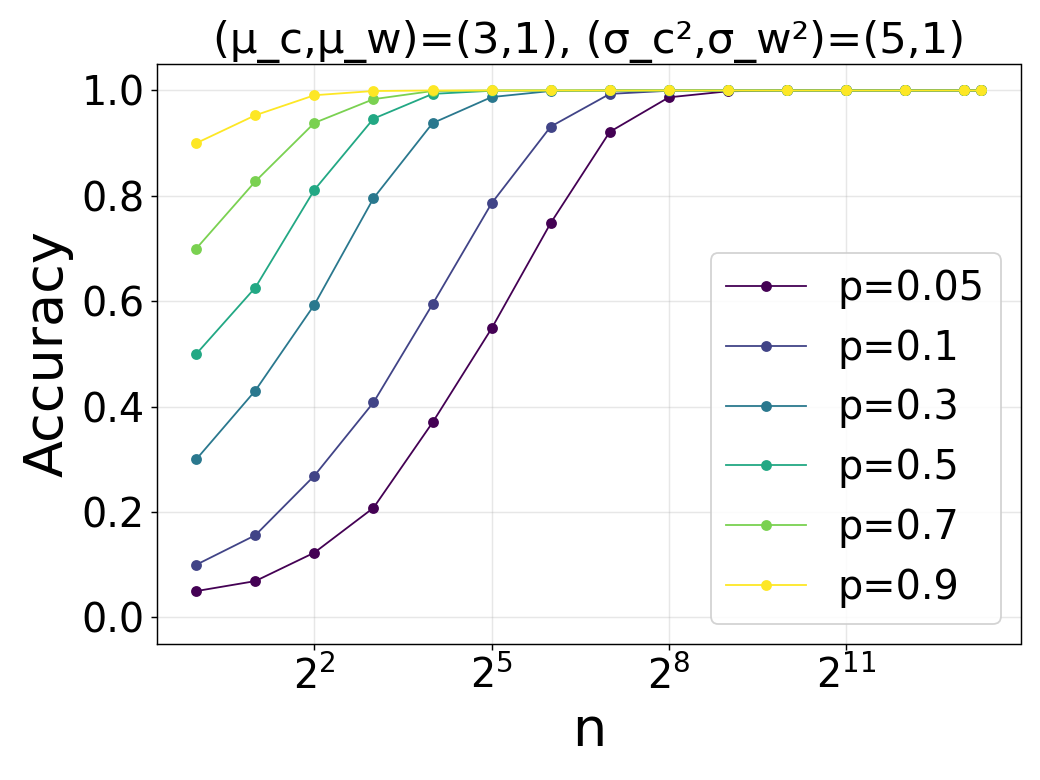}
  \end{minipage}  
  \begin{minipage}{0.24\linewidth}
    \centering
    \includegraphics[width=\linewidth]{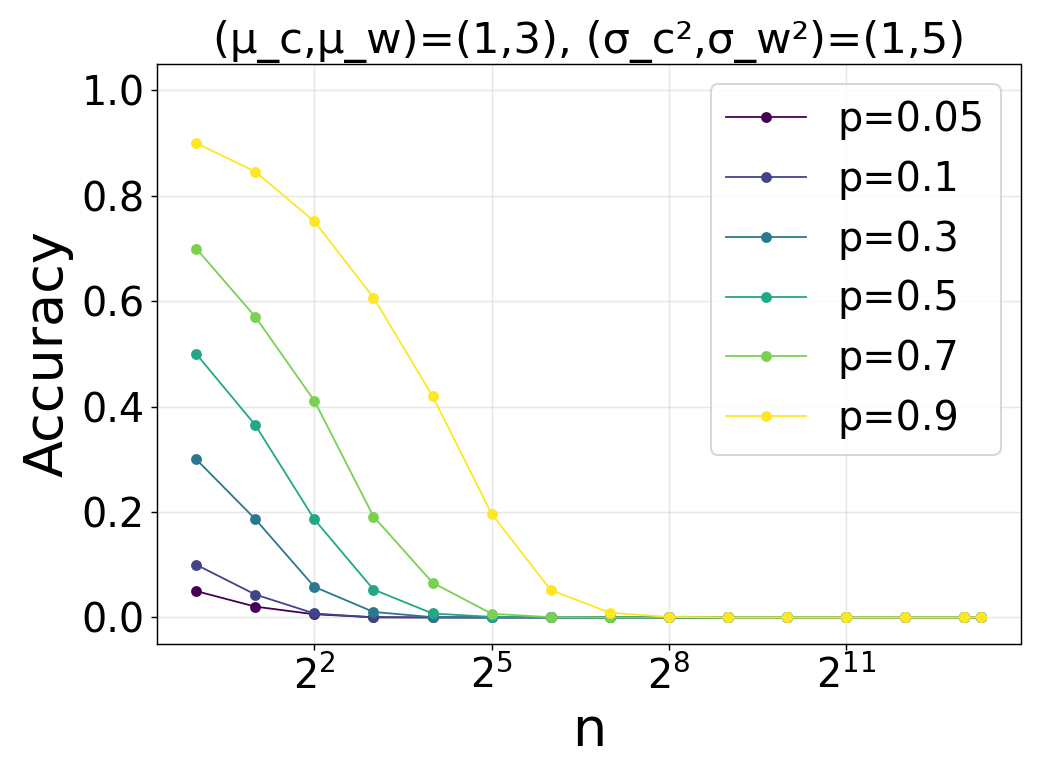}    
  \end{minipage}
  \begin{minipage}{0.24\linewidth}
    \centering
    \includegraphics[width=\linewidth]{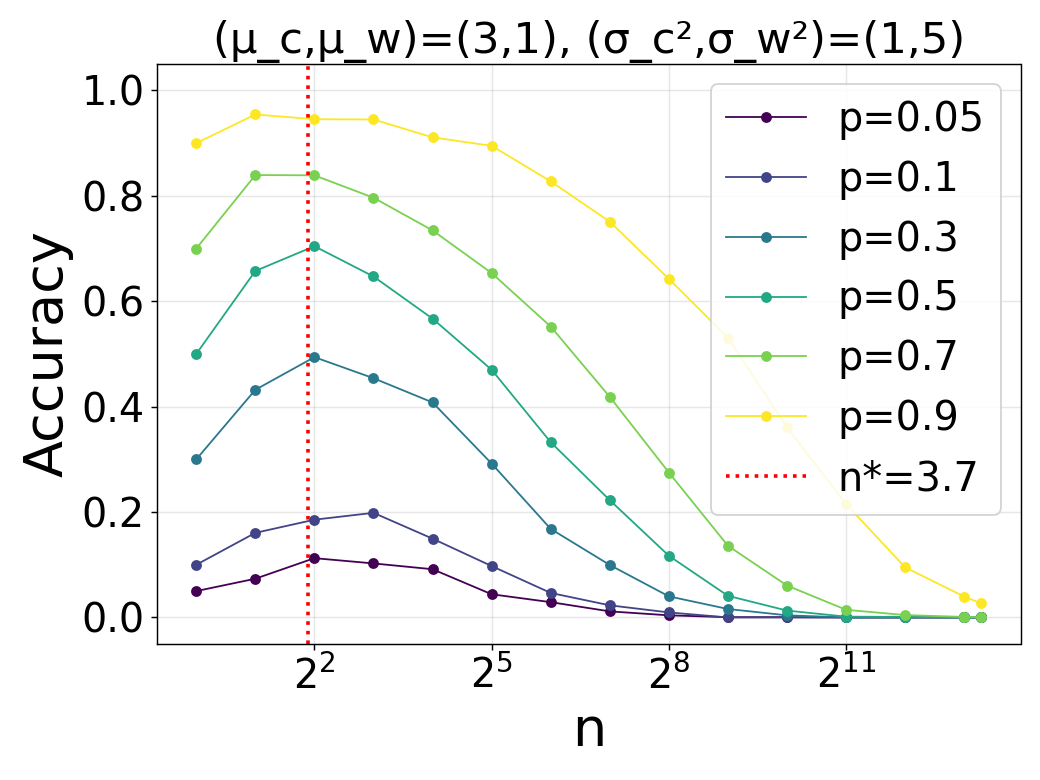}    
  \end{minipage}  
  \begin{minipage}{0.24\linewidth}
    \centering
    \includegraphics[width=\linewidth]{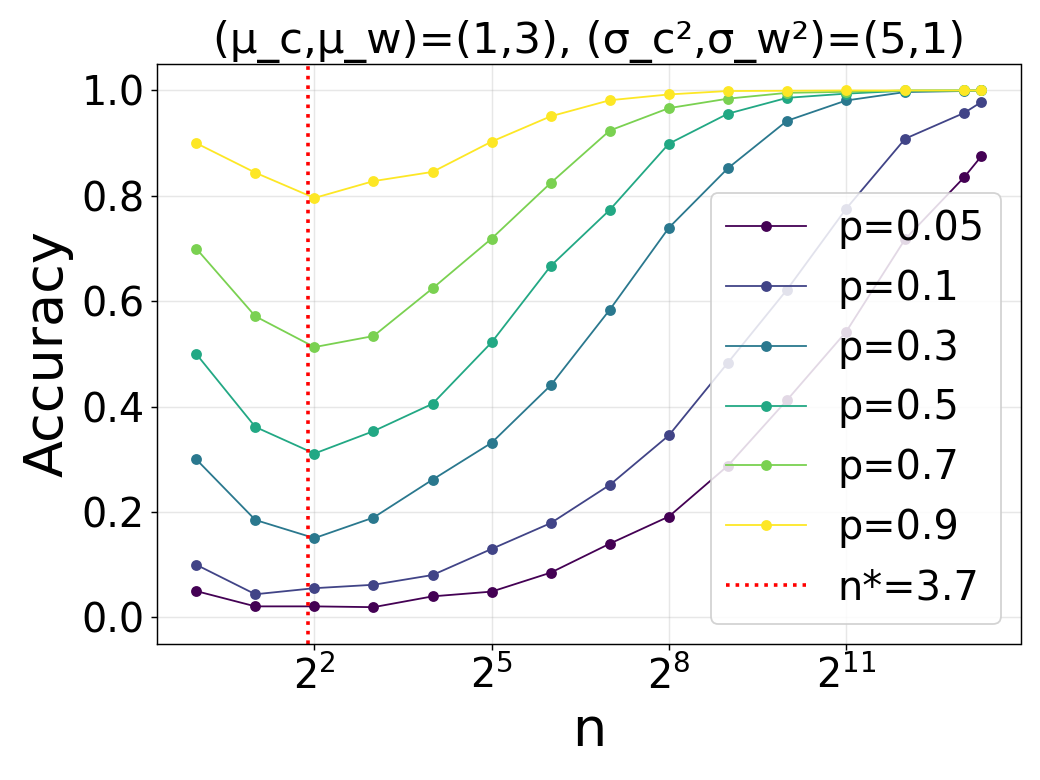}    
  \end{minipage}
  \caption{
    Synthetic Gaussian experiments for the four phases. The observed BoN curves match the predicted iA, iiB, iiA, and iB behaviors, and the predicted $n^\star$ aligns with the empirical optimum in the reward-hacking phase.
  }
  \label{fig:four phase}
\end{figure}

Figure~\ref{fig:scaling theory} shows the results of confirming the consistency between the theoretical formula of the scaling law and the experimental results using synthetic data. The circular points in the figure are the average accuracies from BoN simulations, while the dotted line is the scaling law predicted by the theory, showing that the theory and experiments are consistent. The square points are plotted to confirm Lemma~\ref{prop:typical_region_reduction}, and correspond to the results restricted to cases where $k_n$ correct responses were generated during the simulation, namely $A_n(k_n; x)$. Since the circular points, i.e., $S_n(x)$, and the square points, $A_n(k_n; x)$, take close values, we can confirm the correctness of Lemma~\ref{prop:typical_region_reduction}.
In addition, all results obtained under different simulation conditions are shown in Figure~\ref{fig:scaling theory all}.
%
\begin{wrapfigure}{r}{0.6\linewidth}
\centering
\scalebox{0.8}{
\includegraphics[width=\linewidth]{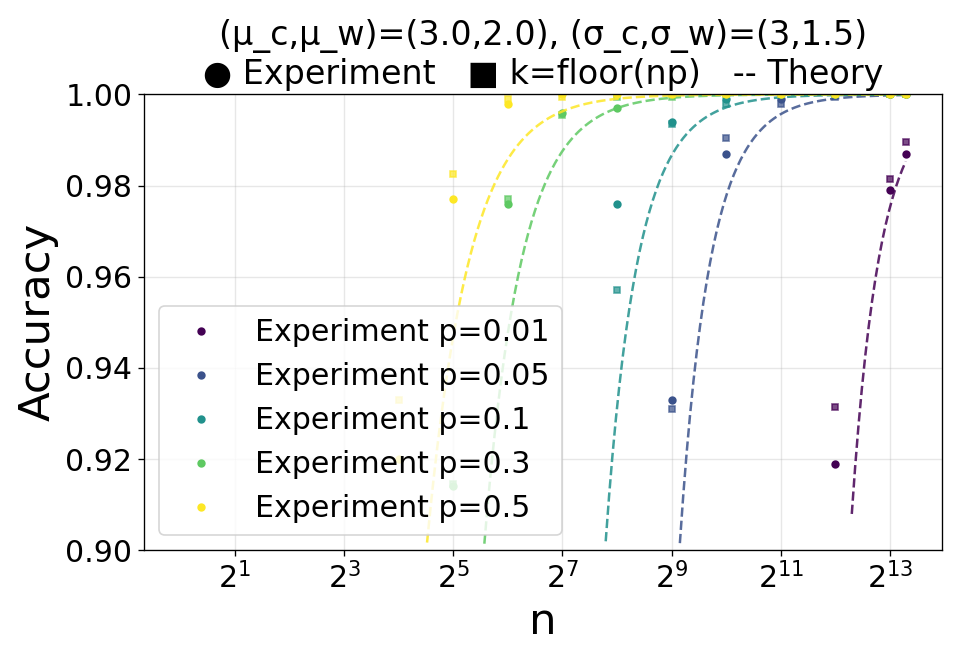}
}
  \caption{
  Experimental validation of the EVT approximated Gaussian scaling law. BoN accuracy follows the predicted finite-$n$ scaling Eq. (\ref{eq:gaussian scaling law}) well. Furthermore, the typical-region accuracy $A_n(k_n;x)$ closely matches $S_n(x)$ as in Lemma~\ref{prop:typical_region_reduction}.
  }
  \label{fig:scaling theory}
\label{fig:scaling theory}
\end{wrapfigure}

\paragraph{Effectiveness of the proposed early stopping method.} We evaluate the effectiveness of the early stopping method introduced in Section~\ref{sec:tail_tension_stopping} in an online setting, where only $n$ generated samples are available for fitting the Gaussian mixture model. 
We show the results of RAMO-Llama3.1-8B, where reward hacking occurs particularly often. Results for other reward models are provided in Appendix~\ref{appendix: full results for tail-tension early stopping}.
Since the extreme-value approximation of the Gaussian distribution has an average deviation of about 3--4\% at $n=32$, we set $n_{\mathrm{th}}=32$. Experimental results with standard deviation are shown in Figure~\ref{experiment:ours}. 
From left to right, the panels show the results on the entire dataset, on datasets where reward hacking occurs, and on datasets where it does not occur. In each case, the upper plot shows accuracy, while the lower plot shows the total number of generated samples. On datasets where reward hacking occurs, the proposed method outperforms standard BoN while also substantially reducing computational cost. In contrast, in the rightmost plot, where accuracy increases monotonically, early stopping is in principle suboptimal; accordingly, the proposed method does not increase accuracy.
\begin{figure}[htb]
  \centering 
  \begin{minipage}{0.3\linewidth}
    \centering
    \includegraphics[width=\linewidth]{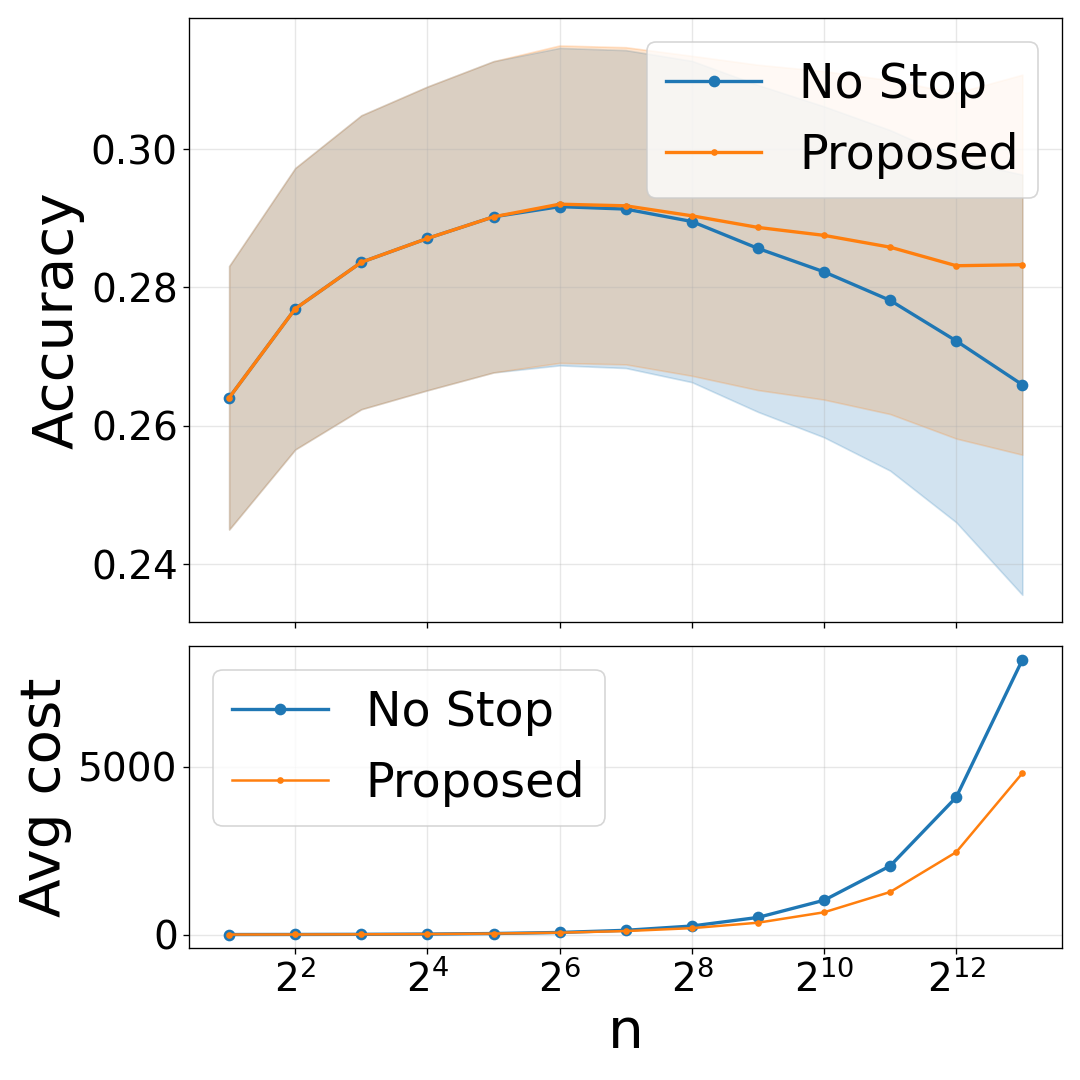}
  \end{minipage}  
  \begin{minipage}{0.3\linewidth}
    \centering
    \includegraphics[width=\linewidth]{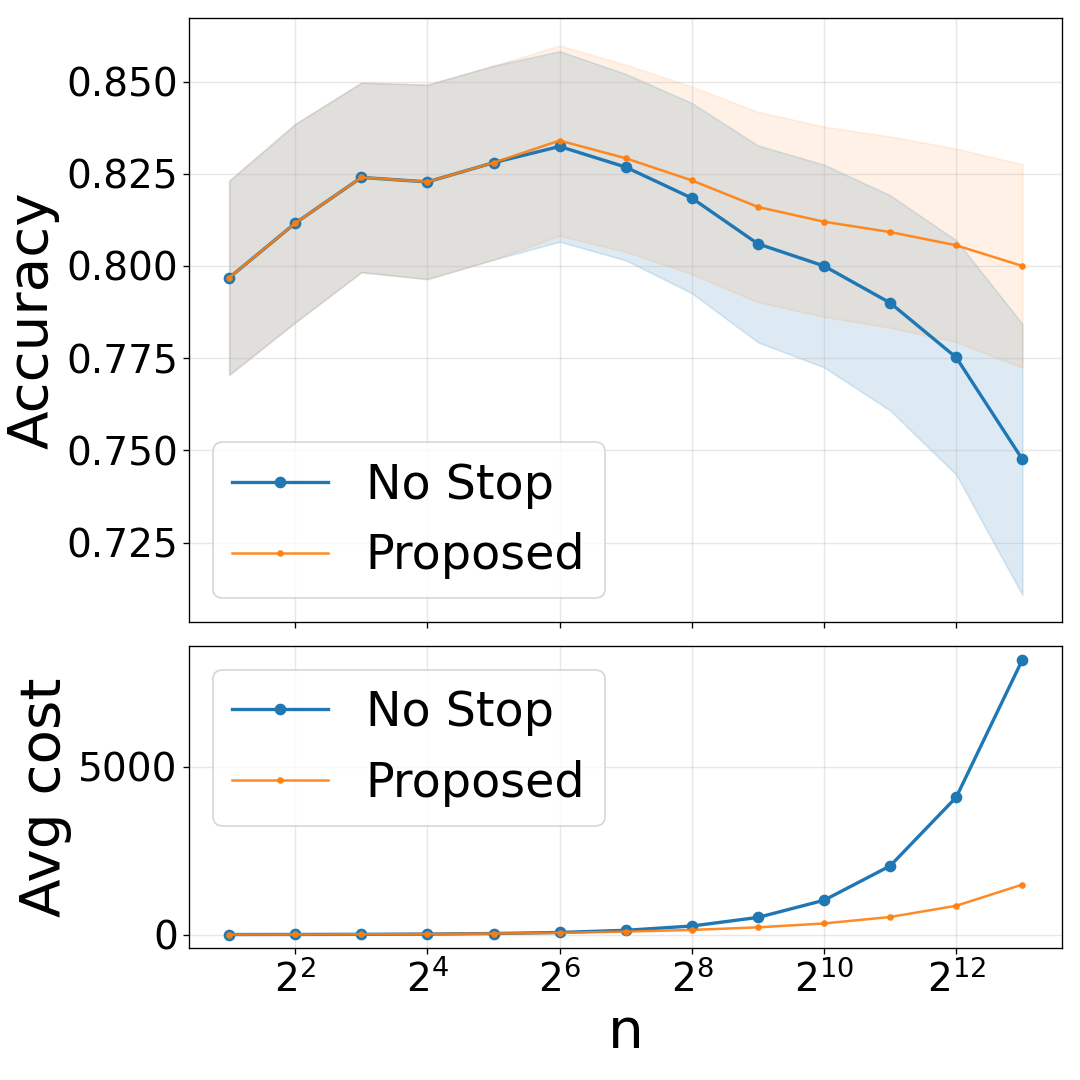}    
  \end{minipage}  
  \begin{minipage}{0.3\linewidth}
    \centering
    \includegraphics[width=\linewidth]{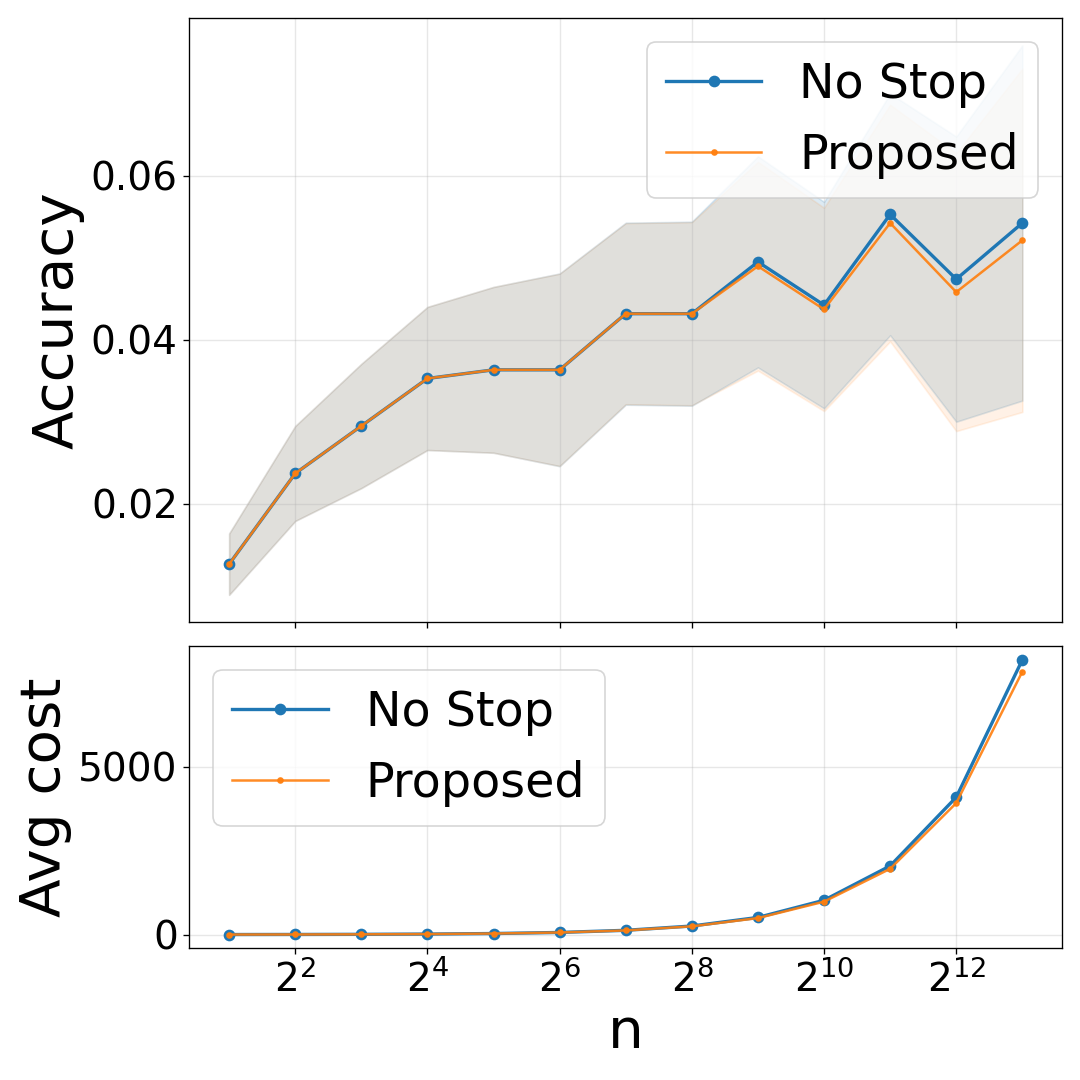}    
  \end{minipage}

  \caption{Tail-tension early stopping on RAMO-Llama3.1-8B scores. The proposed method mitigates degradation in reward-hacking cases while reducing computational cost.
  }
  \label{experiment:ours}
\end{figure}
\section{Related Work} 

\paragraph{Scaling Laws for Inference-Time Scaling Methods.}
Inference-time scaling improves LLM performance by allocating more computation at inference time~\citep{DBLP:journals/corr/abs-2408-03314,DBLP:conf/emnlp/MuennighoffYSLFHZLCH25}. Its basic principle is that, with multiple sampled generations, pass@$n$ approaches one exponentially fast as $n$ increases~\citep{DBLP:journals/corr/abs-2407-21787, DBLP:conf/icml/SchaefferKHJPLJ25}, suggesting that performance can improve if a correct answer can be selected, though this law assumes oracle selection. \cite{NEURIPS2024_51173cf3} showed that Majority Vote and Filter-Vote reduce error roughly exponentially when correctness probability converges to one, e.g., when the correct answer has a positive probability gap over the most frequent incorrect answer. \citet{chen2025provable} proposed knockout-style and league-style selection algorithms with polynomial failure bounds, assuming nonzero correct-generation probability and better-than-random correct--incorrect comparisons.

\paragraph{Analysis of Best-of-N sampling.}
\cite{DBLP:journals/corr/abs-2510-18982} analyze inference-time scaling with a verifier via optimal transport, relating generator coverage, verifier ROC, and sampling sub-optimality. \cite{DBLP:conf/icml/HuangBL0KF25} analyzes BoN regret, explaining overoptimization as a trade-off between gains from larger $n$ and degradation from reward-model errors. Because reward models are only proxies for true objectives such as preferences, safety, or correctness, strong optimization can cause reward hacking or overoptimization~\citep{DBLP:conf/icml/GaoSH23}. \cite{DBLP:journals/corr/abs-2506-19248} formally defines reward hacking and proposes HedgeTune, which tunes dataset-specific parameters using validation data. \cite{DBLP:conf/nips/KwaTG24} argues that reward-model errors dominate in the high-reward regime, especially under heavy tails, and cannot be eliminated by KL regularization. \cite{DBLP:journals/corr/abs-2603-06797} shows that optimistic selection works for light-tailed distributions, whereas heavy-tailed ones require pessimistic regularization, and proposes Best-of-Tails based on Tsallis-divergence regularization. Unlike previous work on KL divergence, regret, or the tail of a single reward distribution, our extreme-value-theoretic analysis characterizes when BoN improves monotonically or suffers reward hacking. In particular, it shows that even with a heavy-tailed marginal reward distribution, BoN can remain in an improving phase when the correct-response tail dominates, yielding new scaling laws and optimal-stopping results.

\section{Conclusion}\label{sec:conclusion}
In this work, we formulated Best-of-N sampling for verifiable problems as a competition between the upper tails of the reward-score distributions for correct and incorrect responses. From this formulation, we derived three main consequences. First, we classified the scaling behavior of BoN into four phases including monotonic scaling and reward hacking. Second, for representative reward-score distributions, we derived finite-sample scaling laws, showing that incomplete reward models can lead to polynomial convergence unlike oracle selection, and that Gaussian reward models can exhibit reward hacking when the variance of incorrect responses is large. Third, as an application of the theory, we derived a tail-tension criterion that estimates the risk of oversampling using only reward scores. Through experiments, we confirmed that the proposed tail-competition perspective explains the phase classification, scaling laws, and emergence of reward hacking in BoN.

\paragraph{Limitations and Future Work. } Our analysis has several limitations, which also suggest promising directions for future work. First, this study focuses on verifiable problems, and applying the proposed theoretical framework to open-ended tasks remains an important direction for future research. Second, while our early stopping method is designed for settings in which the reward distribution separates correct and incorrect answers, further investigation is needed for more general cases, such as non-bimodal or far from Gaussian reward distributions or scenarios in which incorrect answers receive higher average rewards than correct ones. Finally, while our analysis focuses on the asymptotic regime of large $n$ to obtain closed-form scaling laws and characterize the qualitative behavior of BoN, developing practical methods for accurately identifying such regimes at finite $n$ from real-world data remains an important direction for future work.

\bibliographystyle{plainnat}
\bibliography{reference}


\appendix
\section{Proofs}
\newenvironment{proofsketch}
  {
   \noindent\textbf{Proof sketch.}\ }
  {\par\medskip}

\subsection{Lemma~\ref{prop:typical_region_reduction}}\label{appendix:ass:typical_stability proof}

\begin{proofsketch}
We decompose the difference between $S_n$ and $A_n$ according to whether $K$ lies within $M\sqrt{n}$ of $k_n$ or farther away. The contribution from the former term tends to zero by the assumption that the expected variation of $A_n(K;x)$ within $M\sqrt{n}$ of $K=k_n$ is zero. On the other hand, the contribution from the latter term is exponentially small by Hoeffding's inequality.
\end{proofsketch}

\begin{proof}
Let $K\sim\mathrm{Bin}(n,p(x))$. Then
\[
S_n(x)=\mathbb{E}_{K\sim\mathrm{Bin}(n,p(x))}[A_n(K;x)]
\]
holds.
Therefore, for any $M>1$,
\begin{align}\label{eq:appendix:1}
|S_n(x)-A_n(k_n;x)|
&\le
\mathbb{E}\!\left[
|A_n(K;x)-A_n(k_n;x)|
\mathbf{1}\{|K-k_n|\le M\sqrt n\}
\right]\nonumber
\\
&\qquad
+\,\Pr(|K-k_n|>M\sqrt n)
\end{align}
Here, we used the fact that the absolute value of the expectation is no greater than the expectation of the absolute value, and we split the possible values of $K$ into the region near the mean and the region that deviates substantially from the mean; the latter term is bounded by using $0 \leq A_n\leq 1$, which implies $|A_n(K;x) - A_n(k_n;x)|\leq 1$.

Since $0\leq np(x)-k_n< 1$, if $|K-k_n| > M\sqrt{n}$ and $n \geq 1 $, then by the triangle inequality and the assumption, $|K-np(x)| \geq |K-k_n| - |np(x)-k_n| > M\sqrt{n}-1 \geq M\sqrt{n}-\sqrt{n}$ holds. Therefore, by the monotonicity of probability,

\[
\Pr(|K-k_n|>M\sqrt n)
\le
\Pr\!\left(|K-np(x)|>(M-1)\sqrt n\right).
\]

By using Hoeffding's inequality, we have
\[
\Pr\!\left(|K-np(x)|>(M-1)\sqrt n\right)
=
\Pr\!\left(
\left|\frac{K}{n}-p(x)\right|>\frac{M-1}{\sqrt n}
\right)
\le
2e^{-2(M-1)^2}.
\]
Here, the coefficient $2$ appears because Hoeffding's inequality is applied to each of the two tails in
$\Pr\!\left(\left|\frac{K}{n}-p(x)\right|>\frac{M-1}{\sqrt n}\right) = \Pr\!\left(
\frac{K}{n}-p(x)>\frac{M-1}{\sqrt n}
\right) + \Pr\!\left(
\frac{K}{n}-p(x)<-\frac{M-1}{\sqrt n}
\right)$.

Finally, the following holds.
\[
\limsup_{n\to\infty}|S_n(x)-A_n(k_n;x)|
\le
2e^{-2(M-1)^2}.
\]
Here, we used the assumption of the lemma to conclude that the first term on the right-hand side of Equation~\ref{eq:appendix:1} converges to $0$ as $n\rightarrow\infty$.
\end{proof}

\subsection{Theorem~\ref{theorem:generalized_three_regimes}}
\begin{proof}[Proof]\label{proof:generalized_three_regimes}
First, define the expression inside the probability in $A_n$ by
\begin{eqnarray}
    W_n := \alpha_n Z_c -Z_w + \beta_n,
\end{eqnarray}
and compute its limit as $n\rightarrow\infty$. Then, computing the final probability $\lim_{n\rightarrow \infty}A_n(k_n;x) = \lim_{n\rightarrow \infty}\Pr(W_n > 0)$ gives Table~\ref{tab:appendix:1} below. 

\begin{table}[htb]
\caption{Limits of $A_n(k_n;x)$ classified by the limiting values of $\alpha_n$ and $\beta_n$ as $n\rightarrow \infty$}\label{tab:appendix:1}
\centering
\begin{tabular}{cccc}
\toprule
Limit of $\alpha_n$ & Other limit & Limit of $W_n$ & Limit of $A_n(k_n;x)$ \\
\midrule
$\alpha_n \to \alpha \in (0,\infty)$
& $\beta_n \to \infty$
& $\infty$
& $1$ \\
$\alpha_n \to \alpha \in (0,\infty)$
& $\beta_n \to -\infty$
& $-\infty$
& $0$ \\
$\alpha_n \to \alpha \in (0,\infty)$
& $\beta_n \to \beta \in \mathbbm{R}$
& $\alpha Z_c + \beta - Z_w$
& $\Pr(\alpha Z_c + \beta - Z_w > 0)$\\
\midrule
$\alpha_n \to \infty$
& $\dfrac{\beta_n}{\alpha_n} \to \infty$
& $\infty$
& $1$ \\
$\alpha_n \to \infty$
& $\dfrac{\beta_n}{\alpha_n} \to -\infty$
& $-\infty$
& $0$ \\
$\alpha_n \to \infty$
& $\dfrac{\beta_n}{\alpha_n} \to \tau \in \mathbbm{R}$
& $\frac{W_n}{\alpha_n} \rightarrow (Z_c + \tau)$
& $\Pr(Z_c > -\tau)$\\
\midrule
$\alpha_n \to 0$
& $\beta_n \to \infty$
& $\infty$
& $1$ \\
$\alpha_n \to 0$
& $\beta_n \to -\infty$
& $-\infty$
& $0$ \\
$\alpha_n \to 0$
& $\beta_n \to \beta \in \mathbbm{R}$
& $\beta - Z_w$
& $Pr(Z_w < \beta)$\\
\bottomrule
\end{tabular}
\end{table}

\end{proof}



\subsection{Lemma~\ref{prop:evt_approximation_error_bound}}\label{appendix:evt_approximation_error_bound}
\begin{proof}
Let $\tilde M_{c,n}:=a_c(k_n)Z_c + b_c(k_n), \tilde M_{w,n}:=a_w(m_n)Z_w + b_w(m_n)$, and $F_{c, n}, F_{w, n}, \tilde F_{c, n}$ and $\tilde F_{w, n}$ be the CDF for $M_c, M_w, \tilde M_{c,n}$ and $\tilde M_{w,n}$, respectively. Since $\Pr(M_c>M_w)=\mathbb{E}_{M_c}[F_{w,n}(M_c)]$ and
$\Pr(\tilde M_{c,n}>\tilde M_{w,n})
=\mathbb{E}_{\tilde M_{c,n}}[\tilde F_{w,n}(\tilde M_{c,n})]$, by the triangle inequality, the difference is bounded by
$\varepsilon_{w,n} + \left|
\mathbb{E}_{M_c}[\tilde F_{w,n}(M_c)]
-
\mathbb{E}_{\tilde M_{c,n}}[\tilde F_{w,n}(\tilde M_{c,n})]
\right|$. Since $\tilde F_{w,n}$ is a continuous increasing CDF
with values in $[0,1]$, the tail integral formula gives, for any random
variable $X$,
\[
\mathbb{E}[\tilde F_{w,n}(X)]
=
\int_0^1 \Pr\!\left(X>\tilde F_{w,n}^{-1}(u)\right)\,du.
\]
Therefore,
\[
\begin{aligned}
\left|
\mathbb{E}[\tilde F_{w,n}(M_c)]
-
\mathbb{E}[\tilde F_{w,n}(\tilde M_{c,n})]
\right|
&\le
\int_0^1
\left|
F_{c,n}(\tilde F_{w,n}^{-1}(u))
-
\tilde F_{c,n}(\tilde F_{w,n}^{-1}(u))
\right|du \\
&\le \varepsilon_{c,n}.
\end{aligned}
\]
Combining the two bounds proves the claim.
\end{proof}

\subsection{Theorem~\ref{theorem:gumbel scaling}}\label{appendix:proof:gumbel}
\begin{proofsketch}
Since $\Pr\{G_w\ge \alpha_nG_c+\beta_n\}=1-\exp\{-\mathrm{e}^{-(\alpha_nG_c+\beta_n})\}\rightarrow
\mathrm{e}^{-\beta_n}\mathrm{e}^{-\alpha_n G_c},$ and \(\alpha_n\to\alpha\) and \(\beta_n\to\infty\), the probability is expected to decay to zero approximately at the rate \(\mathrm{e}^{-\beta_n}\). To determine the coefficient, we apply the dominated convergence theorem to $
\mathrm{e}^{\beta_n}\Pr\{G_w\ge \alpha_nG_c+\beta_n\},$ which allows us to interchange the limit and the expectation and thereby reduce the problem to evaluating the integral $\mathbbm{E}_{G_c}\!\left[\mathrm{e}^{-\alpha G_c}\right].$
\end{proofsketch}

\begin{proof}
With Gumbel extreme-value approximation
$\psi(t):=\Pr(G_w\ge t)=1-\exp(-e^{-t})$, independence gives
$\Pr\{G_w\ge \alpha_nG_c+\beta_n\}=\mathbb{E}_{G_c}\bigl[\psi(\alpha_nG_c+\beta_n)\bigr]$.
Define $H_n(y):=e^{\beta_n}\psi(\alpha_n y+\beta_n)$.  Since
$\beta_n\to\infty$ and $\alpha_n\to\alpha<\infty$, for each fixed $y$, $H_n(y)=e^{\beta_n}\left(1-\exp\{-e^{-(\alpha_ny+\beta_n)}\}\right)\to e^{-\alpha y}$.
Moreover, using $1-e^{-u}\le u$, we have $0\le H_n(y)\le e^{-\alpha_n y}$.  For sufficiently large $n$, choose $R>0$ such that $0\le\alpha_n\le R$; then
$H_n(y)\le 1+e^{-Ry}$.  This is integrable under the standard Gumbel law, because for $q>-1$,
$\mathbb{E}[e^{-qG}]=\Gamma(1+q)$, obtained from the change of variables $u=e^{-t}$ in the Gumbel density.
Hence the dominated convergence theorem yields
\[
e^{\beta_n}
\Pr\{G_w\ge \alpha_nG_c+\beta_n\}
=
\mathbb{E}[H_n(G_c)]
\to
\mathbb{E}[e^{-\alpha G_c}]
=
\Gamma(1+\alpha).
\]
Therefore,
$\Pr\{G_w\ge \alpha_nG_c+\beta_n\}=\Gamma(1+\alpha)e^{-\beta_n}+o(e^{-\beta_n})$ holds, and we obtain
\[
\tilde{A}_n(k_n;x)
=
1-
\Gamma(1+\alpha)e^{-\beta_n(x)}
+
o\!\left(e^{-\beta_n(x)}\right).
\]
\end{proof}

\subsection{Theorem~\ref{theorem:frechet scaling}}\label{appendix:proof:frechet}
\begin{proofsketch}
The basic idea is the same as in the proof for the Gumbel distribution (Appendix~\ref{appendix:proof:gumbel}): instead of dividing by $\mathrm{e}^{-\beta_n}$, we divide the probability by  $\alpha_n^{-\rho_w}$, which converges to zero to calculate the coefficient.
\end{proofsketch}
\begin{proof}
With the Fr\'echet extreme-value approximation,
$Z_c$ and $Z_w$ are independent and have Fr\'echet distribution function $\exp(-z^{-\rho_c})$ and $\exp(-z^{-\rho_w})$, respectively. 
Set $r_n:=\alpha_n^{-\rho_w}$.  In the correct-side extreme-value-dominant phase, $\alpha_n\to\infty$, hence $r_n\to0$.  Therefore,$1-\Pr\{\alpha_n Z_c>Z_w\}=\mathbb{E}\left[1-\exp\left\{-r_nZ_c^{-\rho_w}\right\}\right]$.
Using $0\le(1-e^{-u})\le u$ for $u>0$, the integrand divided by $r_n$ converges pointwise to $Z_c^{-\rho_w}$ and is dominated by $Z_c^{-\rho_w}$.  This dominating variable is integrable: if $T:=Z_c^{-\rho_c}$, then $T\sim\operatorname{Exp}(1)$, and hence
\[
\mathbb{E}[Z_c^{-\rho_w}]
=
\mathbb{E}\left[T^{\rho_w/\rho_c}\right]
=
\Gamma\!\left(1+\frac{\rho_w}{\rho_c}\right)
<
\infty.
\]
By dominated convergence,
$1-\Pr\{\alpha_n Z_c>Z_w\}=\Gamma\!\left(1+\frac{\rho_w}{\rho_c}\right)\alpha_n^{-\rho_w}+o\!\left(\alpha_n^{-\rho_w}\right)$.
Therefore,
\[
\tilde{A}_n(k_n;x)
=
1-
\Gamma\!\left(1+\frac{\rho_w}{\rho_c}\right)
\alpha_n(x)^{-\rho_w}
+
o\!\left(\alpha_n(x)^{-\rho_w}\right).
\]
\end{proof}

\subsection{Theorem~\ref{theorem:weibull scaling}}\label{appendix:proof:weibull}
\begin{proofsketch}
The basic idea is the same as in the proof for the Gumbel distribution (Appendix~\ref{appendix:proof:gumbel}): instead of dividing by $\mathrm{e}^{-\beta_n}$, we divide the probability by  $\alpha_n^{-\frac{1}{\xi_w}}$, which converges to zero and examine the coefficient.
\end{proofsketch}

\begin{proof}
Since the common upper endpoint is $r^{\max}$, the event
$M_c(k_n;x)>M_w(m_n;x)$ is equivalent to
$r^{\max}-M_c(k_n;x)<r^{\max}-M_w(m_n;x)$.  
Since $b_c = b_w$ from the assumption, $\beta_n = 0$. Therefore, using the Weibull survival function $\Pr(Y_w>y)=\exp(-y^{-\frac{1}{\xi_w}})$,
\[
\Pr\{\alpha_n Y_c<Y_w\}
=
\mathbb{E}_{Y_c}\left[
\exp\left\{-\alpha_n^{-\frac{1}{\xi_w}}Y_c^{-\frac{1}{\xi_w}}\right\}
\right].
\]
Set $r_n:=\alpha_n^{-\frac{1}{\xi_w}}$.  In the correct-side extreme-value-dominant phase, $\alpha_n\to0$, hence $r_n\to0$. 
Again, $0\le(1-e^{-u})/u\le1$ implies that the integrand divided by $r_n$ converges pointwise to $Y_c^{-\frac{1}{\xi_w}}$ and is dominated by $Y_c^{-\frac{1}{\xi_w}}$.  This variable is integrable: if $T:=Y_c^{-\frac{1}{\xi_c}}$, then $T\sim\operatorname{Exp}(1)$, so
\[
\mathbb{E}[Y_c^{-\frac{1}{\xi_w}}]
=
\mathbb{E}\left[T^{-\frac{1}{\xi_w}/-\frac{1}{\xi_c}}\right]
=
\Gamma\!\left(1+\frac{-\frac{1}{\xi_w}}{-\frac{1}{\xi_c}}\right)
<
\infty.
\]
Therefore,$1-\Pr\{\alpha_nY_c<Y_w\}=\Gamma\!\left(1+\frac{-\frac{1}{\xi_w}}{-\frac{1}{\xi_c}}\right)
\alpha_n^{-\frac{1}{\xi_w}}+o\!\left(\alpha_n^{-\frac{1}{\xi_w}}\right)$.
Thus,
\[
\tilde{A}_n(k_n;x)
=
1-
\Gamma\!\left(1+\frac{-\frac{1}{\xi_w}}{-\frac{1}{\xi_c}}\right)
\alpha_n(x)^{-\frac{1}{\xi_w}}
+
o\!\left(\alpha_n(x)^{-\frac{1}{\xi_w}}\right).
\]
\end{proof}

\subsection{Scaling Law for Gaussian Distribution without Extreme Value Approximation}\label{appendix:precise gaussian scaling}
We calculate the scaling law for reward model with Gaussian distribution without extreme value approximation. 

Let $Z_k=\frac{M_c(k)-\mu_c}{\sigma_c}$. Then, 
\begin{eqnarray}
  1-A_n(k) = 1 - \mathbb{E}_{Z_k}[\Phi(\frac{\sigma_c Z_k + \mu_c - \mu_w}{\sigma_w})^m],
\end{eqnarray}
since $1-A_n(k)=\Pr(M_c(k) < M_w(m)) = \Pr(\sigma_{c}Z_k+\mu_c < M_w)=1-\Pr(\sigma_cZ_k + \mu_c \geq M_w) = 1 - \mathbbm{E}_{Z_k}[\Pr(\sigma_cz + \mu_c \geq R_w)^m] = 1 - \mathbbm{E}_{Z_k}[\Pr(\frac{\sigma_c Z_k + \mu_c - \mu_w}{\sigma_w} \geq \frac{\sigma_c R_w - \mu_w}{\sigma_w})^m] = 1 - \mathbb{E}_{Z_k}[\Phi(\frac{\sigma_c Z_k + \mu_c - \mu_w}{\sigma_w})^m]$, where $\Phi$ is the CDF for standard normal distribution.

First, to approximate the integrand for large $k$, we calculate the Mills' ratio for $1-\Phi(x)$. The result is 
\begin{eqnarray}
  1-\Phi(x) = \frac{\phi(x)}{x}(1 + O(x^{-2})),\label{eq:mills ratio}
\end{eqnarray}
where $\phi$ is PDF for standard normal distribution. To confirm Eq.\eqref{eq:mills ratio}, $1-\Phi(x) = \int_x^{\infty}\phi(u)du = -\int_x^{\infty} \frac{\phi^{\prime}(u)}{u} du = -[\frac{\phi(u)}{u}]_x^{\infty} - \int_x^{\infty} \frac{\phi(u)}{u^2}du = \frac{\phi(x)}{x} - \int_x^{\infty}\frac{\phi(u)}{u^2}du$. We used $\phi^{\prime}(u) = -u\phi(u)$ for the standard normal distribution for the second equality.
Since $\int_x^{\infty}\frac{\phi(u)}{u^2}du \leq \frac{1}{x^2}\int_x^{\infty}\phi(u)du = \frac{1-\Phi(x)}{x^2}$, 
we get $1-\Phi(x) \geq \frac{\phi(x)}{x} - \frac{1-\Phi(x)}{x^2} \Leftrightarrow 1-\Phi(x) \geq \frac{\phi(x)}{x}(1+\frac{1}{x^2})^{-1} = \frac{\phi(x)}{x}(1-\frac{1}{x^2} + O(x^{-4}))$. Combining $1-\Phi(x) \leq \frac{\phi(x)}{x}$, Eq.\eqref{eq:mills ratio} follows. 

Second, we prove that $u_k:=\Phi^{-1}(1-\frac{1}{k})$ for large $k$ satisfies the following.
\begin{eqnarray}
  u_k^2 = 2\log{k} - \log{4\pi} -\log{\log{k}} + o(1). \label{eq:u_k^2}
\end{eqnarray}
By the definition of $u_k$ and Mills' ratio, we get $1-\Phi(u_k) = \frac{1}{k}\Leftrightarrow \frac{\phi(u_k)}{u_k}(1 + O(u_k^{-2})) = \frac{1}{k} \Leftrightarrow u_k^2 + 2\log{\sqrt{2\pi}} + 2\log{u_k} + O(u_k^{-2}) = 2\log{k}$, where we used $\log{(1+x)} = x + O(x^{2})$ for $x\rightarrow 0$. Then, comparing leading order, $u_k^2(1 + o(1))=2\log{k} \Leftrightarrow u_k^2 = 2\log{k}(1 + o(1))$. Returning back to the original equation, we get $u_k^2 = 2\log{k} - \log{4\pi} -\log{\log{k}} + o(1)$.

Additionally, convenient formulas as: 
\begin{eqnarray}
  &&\frac{u_k}{u_{\frac{k}{v}}} = \sqrt{\frac{2\log{k}+o(\log{k})}{2\log{\frac{k}{v}}+o(\log{k})}} \rightarrow 1, \\
  &&u_{\frac{k}{v}}^2 - u_k^2= 
  2\log{k} - 2\log{v} - \log{4\pi} -\log\{\log{k(1 - \frac{\log{v}}{\log{k}})}\} - (2\log{k} - \log{4\pi} -\log\log{k}),\nonumber\\
  &&\hspace{10cm}\rightarrow -2\log{v}
  \\
  &&u_{\frac{k}{v}} - u_k = \frac{u_{\frac{k}{v}}^2 - u_k^2}{u_{\frac{k}{v}} +  u_k} \rightarrow 0.
\end{eqnarray}

Then, we define $v = k(1 - \Phi(z))\Leftrightarrow z=u_{\frac{k}{v}}$, and $\frac{dz}{dv}=-\frac{1}{k\phi(z)}$ holds. Thus, 
\begin{eqnarray}
1-A_n(k)&=& \int_{-\infty}^{\infty}[1 - \Phi^m(\frac{\sigma_c}{\sigma_w}z + \frac{\mu_c - \mu_w}{\sigma_w})]k\phi(z)\Phi(z)^{k-1}dz \\
&=& \int_0^k[1 - \Phi^m(\frac{\sigma_c}{\sigma_w}u_{\frac{k}{v}}+\frac{\mu_c-\mu_w}{\sigma_w})](1 - \frac{v}{k})^{k-1}dv.  
\end{eqnarray}

Dividing the integral by $m(1 - \Phi(\frac{\sigma_c}{\sigma_w}u_k + \frac{\mu_c - \mu_w}{\sigma_w}))$, 
\begin{eqnarray}
&&\frac{1 - A_n(k)}{m(1 - \Phi(\frac{\sigma_c}{\sigma_w}u_k + \frac{\mu_c - \mu_w}{\sigma_w}))} \nonumber\\
&&= 
\int_0^k\frac{[1 - \Phi^m(\frac{\sigma_c}{\sigma_w}u_{\frac{k}{v}}+\frac{\mu_c-\mu_w}{\sigma_w})]}{m(1 - \Phi(\frac{\sigma_c}{\sigma_w}u_k + \frac{\mu_c - \mu_w}{\sigma_w}))}(1 - \frac{v}{k})^{k-1}dv\nonumber\\
&&=\int_0^k\frac{[m(1 - \Phi(\frac{\sigma_c}{\sigma_w}u_{\frac{k}{v}}+\frac{\mu_c-\mu_w}{\sigma_w}) + O(m^2(1-\Phi)^2)]}{m(1 - \Phi(\frac{\sigma_c}{\sigma_w}u_k + \frac{\mu_c - \mu_w}{\sigma_w}))}(1 - \frac{v}{k})^{k-1}dv. \nonumber \\ 
\end{eqnarray}
By using Mills ratio, 
\begin{eqnarray}
&&\frac{1 - \Phi(\frac{\sigma_c}{\sigma_w}u_{\frac{k}{v}}+\frac{\mu_c-\mu_w}{\sigma_w})}{1 - \Phi(\frac{\sigma_c}{\sigma_w}u_k + \frac{\mu_c - \mu_w}{\sigma_w})} = \frac{\frac{\sigma_c}{\sigma_w}u_k+\frac{\mu_c-\mu_w}{\sigma_w}}{\frac{\sigma_c}{\sigma_w}u_{\frac{k}{v}}+\frac{\mu_c-\mu_w}{\sigma_w}}\nonumber\\
&&\hspace{1cm}\cdot\exp\left[-\frac{(\frac{\sigma_c}{\sigma_w}u_{\frac{k}{v}}+\frac{\mu_c-\mu_w}{\sigma_w})^2 - (\frac{\sigma_c}{\sigma_w}u_k+\frac{\mu_c-\mu_w}{\sigma_w})^2}{2}\right](1+o(1)).\nonumber\\
\end{eqnarray}
Exponential becomes $-\frac{1}{2}\left[(\frac{\sigma_c}{\sigma_w})^2 (u_{\frac{k}{v}}^2 - u_k^2)+2\frac{\sigma_c(\mu_c - \mu_w)}{\sigma_w^2}(u_{\frac{k}{v}} - u_k)\right]$. Therefore, applying formulas for $u_k$, we obtain
\begin{eqnarray}
  \frac{1 - \Phi(\frac{\sigma_c}{\sigma_w}u_{\frac{k}{v}}+\frac{\mu_c-\mu_w}{\sigma_w})}{1 - \Phi(\frac{\sigma_c}{\sigma_w}u_k + \frac{\mu_c - \mu_w}{\sigma_w})} \rightarrow 1\cdot \exp\left[\frac{(\frac{\sigma_c}{\sigma_w})^2\cdot (-2\log{v}) - 0}{2}\right] = v^{r^2}.
\end{eqnarray}

Finally, we can get 
\begin{eqnarray}
  \frac{1 - A_n(k)}{m(1 - \Phi(\frac{\sigma_c}{\sigma_w}u_k + \frac{\mu_c - \mu_w}{\sigma_w}))} \rightarrow \int_0^{\infty}v^{(\frac{\sigma_c}{\sigma_w})^2}\mathrm{e}^{^v} dv = \Gamma\left(1 + \left(\frac{\sigma_c}{\sigma_w}\right)^2\right).  
\end{eqnarray}

Thus
\[
1-A_n(k_n)
\sim
\Gamma(1+r^2)\,
m_n\left(1-\Phi\left(\frac{\sigma_c}{\sigma_w} \Phi^{-1}(1 - \frac{1}{k_n})+\frac{\mu_c-\mu_w}{\sigma_w}\right)\right).
\]

Furthermore, using Eq.~\eqref{eq:u_k^2} and Mills' ratio, we obtain the following 
scaling law for Gaussian distribution without extreme value approximation.
\[
\begin{aligned}
1-A_n(k_n)
\sim
\frac{\Gamma(1+(\frac{\sigma_c}{\sigma_w})^2)}{\frac{\sigma_c}{\sigma_w}}
\frac{1-p}{p^{(\frac{\sigma_c}{\sigma_w})^2}}
(4\pi\log n)^{((\frac{\sigma_c}{\sigma_w})^2-1)/2}
n^{-((\frac{\sigma_c}{\sigma_w})^2-1)}\nonumber\\
\hspace{5mm}\cdot\exp
\!\left(-\frac{1}{2\sigma_w^2}\left\{
2\sigma_c(\mu_c - \mu_w)\sqrt{2\log n}
+(\mu_c - \mu_w)^2 \right\}
\right).
\end{aligned}
\]

\section{Supplementary Experimental Results}
\subsection{Misassignment in Tail-Tension Early Stopping}~\label{appendix:misassignment}
We introduced tail-tension early stopping method of Best-of-N sampling in Section~\ref{sec:tail_tension_stopping}. In this section, we discuss the failure mode of our method, specifically the cases in which the heuristic ``higher-mean component $\leftrightarrow$ correct-answer component'' assignment fails. We show that, such cases are empirically rare in our experiments, and cases in which this mismatch actually degrades the performance of the early-stopping method are even rarer. Moreover, this risk can be further mitigated by adopting a stricter early-stopping criterion.

For simplicity, consider the case in which the reward distributions are well approximated by Gaussian distributions. When $\mu_w > \mu_c$, the accuracy of BoN exhibits one of two behaviors as $n$ increases: it either decreases monotonically, or first decreases and then recovers for large $n$, resulting in a U-shaped curve. 

\noindent
\textbf{Phase iiB: Monotonic decrease.}

When $\mu_c < \mu_w$ and $\sigma_c < \sigma_w$, the components are
misaligned under the proposed method, resulting in
$\sigma_l < \sigma_h$. Early stopping is therefore not triggered, and
the method behaves identically to standard BoN.

\noindent
\textbf{Phase iB: Delayed improvement (e.g., a U-shaped curve).}

When $\mu_c < \mu_w$ and $\sigma_c > \sigma_w$, the components are again
misaligned under the proposed method, but now $\sigma_l > \sigma_h$.
Early stopping is therefore triggered and may prevent the method from
benefiting from the recovery in performance that occurs at large $n$.

We show that the phases iiB and iB are rare in the real-world reward models. 
The table~\ref{tab:misassignment ratio} reports the fraction of problem instances for which the
empirical mean reward of incorrect responses $\mu_w$ exceeds that of correct
responses $\mu_c$ by at least $t$ (phases iiB and iB), and vice versa (phases iiA and iA). These results show that misaligned cases are far less frequent than correctly aligned cases, supporting the empirical validity of this assumption in the settings we evaluate.

\begin{center}
\begin{tabular}{ccc}
\toprule
$t$ & $\mu_c \leq \mu_w - t$ & $\mu_c \geq \mu_w + t$ \\
\midrule
0.0 & 11.11\% & 88.89\% \\
0.5 & 5.99\%  & 82.43\% \\
1.0 & 3.49\%  & 73.41\% \\
2.0 & 1.35\%  & 54.61\% \\
\bottomrule
\end{tabular}\label{tab:misassignment ratio}
\end{center}

The table~\ref{tab:misassignment degradation ratio} further reports the proportion of cases satisfying $(\mu_c \leq \mu_w - t)$ and $(\sigma_c > \sigma_w)$, phase iiB, which early stopping can be harmful, as well as the proportion satisfying $(\mu_c \geq \mu_w + t)$ and $(\sigma_w > \sigma_c)$, phase iiA, for which early stopping is beneficial.

\begin{center}
\begin{tabular}{ccc}
\toprule
$t$
& $\mu_c \leq \mu_w - t$ and $\sigma_c > \sigma_w$
& $\mu_c \geq \mu_w + t$ and $\sigma_w > \sigma_c$ \\
\midrule
0.0 & 1.39\% & 70.76\% \\
0.5 & 0.60\% & 65.14\% \\
1.0 & 0.40\% & 57.19\% \\
2.0 & 0.19\% & 40.42\% \\
\bottomrule
\end{tabular}\label{tab:misassignment degradation ratio}
\end{center}

By triggering early stopping only when the difference between the mean
values is at least $1.0$, we can retain the ability to stop 57.19\% of
 possible hacking cases while reducing the fraction of instances in the
potentially harmful reversal regime to 0.40\%.


\subsection{Full Results for Tail-Tension Early Stopping}\label{appendix: full results for tail-tension early stopping}
We show the full results using all reward models excpet for RAMO-Llama3.1-8B, which was used in Section~\ref{sec:experimental results}. The results show that, in many cases, inference cost can be reduced with little or no loss in overall accuracy. In particular, among reward-hacking cases, our method suppresses the degradation in accuracy relative to standard BoN.
\begin{table*}[htb]
\centering
\small
\begin{tabular}{lccccccccc}
\toprule
\textbf{Method / $n$}
& $2^5$ & $2^6$ & $2^7$ & $2^8$ & $2^9$
& $2^{10}$ & $2^{11}$ & $2^{12}$ & $2^{13}$ \\
\midrule

\multicolumn{10}{l}{\textbf{Skywork-Reward-v2-Qwen3-1.7B}} \\
No-stop (all)
& 0.3260 & 0.3339 & 0.3427 & 0.3506 & 0.3546
& 0.3605 & 0.3664 & 0.3712 & 0.3731 \\
Early stopping (all)
& 0.3260 & 0.3336 & 0.3414 & 0.3484 & 0.3527
& 0.3582 & 0.3643 & 0.3693 & 0.3729 \\
No-stop (hack)
& 0.1576 & 0.1572 & 0.1609 & 0.1626 & 0.1576
& 0.1507 & 0.1330 & 0.1033 & 0.0366 \\
Early stopping (hack)
& 0.1576 & 0.1569 & 0.1606 & 0.1624 & 0.1593
& 0.1563 & 0.1462 & 0.1255 & 0.0792 \\
Cost saved
& 2.8\% & 7.4\% & 12.6\% & 17.4\% & 21.0\%
& 23.6\% & 25.4\% & 26.7\% & 27.6\% \\

\midrule
\multicolumn{10}{l}{\textbf{Skywork-Reward-v2-Qwen3-4B}} \\
No-stop (all)
& 0.3313 & 0.3396 & 0.3468 & 0.3539 & 0.3607
& 0.3652 & 0.3707 & 0.3739 & 0.3751 \\
Early stopping (all)
& 0.3313 & 0.3391 & 0.3464 & 0.3532 & 0.3587
& 0.3639 & 0.3696 & 0.3727 & 0.3756 \\
No-stop (hack)
& 0.2518 & 0.2532 & 0.2567 & 0.2589 & 0.2613
& 0.2555 & 0.2503 & 0.2301 & 0.1785 \\
Early stopping (hack)
& 0.2518 & 0.2526 & 0.2567 & 0.2593 & 0.2612
& 0.2602 & 0.2579 & 0.2402 & 0.1999 \\
Cost saved
& 2.0\% & 5.3\% & 9.0\% & 12.6\% & 15.4\%
& 17.5\% & 19.1\% & 20.2\% & 21.0\% \\

\midrule
\multicolumn{10}{l}{\textbf{Skywork-Reward-v2-Qwen3-8B}} \\
No-stop (all)
& 0.3345 & 0.3448 & 0.3538 & 0.3636 & 0.3732
& 0.3792 & 0.3871 & 0.3895 & 0.3945 \\
Early stopping (all)
& 0.3345 & 0.3442 & 0.3526 & 0.3613 & 0.3700
& 0.3751 & 0.3834 & 0.3874 & 0.3937 \\
No-stop (hack)
& 0.1571 & 0.1620 & 0.1602 & 0.1624 & 0.1606
& 0.1526 & 0.1416 & 0.1072 & 0.0487 \\
Early stopping (hack)
& 0.1571 & 0.1622 & 0.1608 & 0.1618 & 0.1626
& 0.1560 & 0.1509 & 0.1262 & 0.0855 \\
Cost saved
& 2.8\% & 7.5\% & 12.8\% & 17.6\% & 21.3\%
& 23.9\% & 25.8\% & 27.1\% & 27.9\% \\

\midrule
\multicolumn{10}{l}{\textbf{allenai/Llama-3.1-8B-Instruct-RM-RB2}} \\
No-stop (all)
& 0.3150 & 0.3202 & 0.3253 & 0.3303 & 0.3338
& 0.3370 & 0.3398 & 0.3405 & 0.3432 \\
Early stopping (all)
& 0.3150 & 0.3195 & 0.3247 & 0.3285 & 0.3329
& 0.3366 & 0.3400 & 0.3427 & 0.3501 \\
No-stop (hack)
& 0.1902 & 0.1877 & 0.1859 & 0.1799 & 0.1736
& 0.1636 & 0.1466 & 0.1135 & 0.0568 \\
Early stopping (hack)
& 0.1902 & 0.1867 & 0.1872 & 0.1829 & 0.1814
& 0.1767 & 0.1675 & 0.1482 & 0.1161 \\
Cost saved
& 3.6\% & 9.3\% & 15.6\% & 21.0\% & 25.0\%
& 27.8\% & 29.7\% & 31.1\% & 32.0\% \\

\bottomrule
\end{tabular}
\end{table*}

\subsection{Analysis of Reward Distribution}
The results of Section~\ref{sec:experimental results}, which examined what kind of distribution the reward distribution of the reward model follows, are shown in Table~\ref{tab:experiment:1}, and the statistics of the reward-value data are reported in Table~\ref{tab:experiment:2}.

\begin{table}[htb]
\caption{Properties of the reward distributions for correct and incorrect answers. $c, w$ are subscripts denoting correct and incorrect answers, respectively, and $\mu, \sigma^2, \gamma, T$ denote the mean, variance, skewness, and tail ratio (the ratio of the right tail to the left tail) of the reward distribution, respectively. In all cases, correct answers are assigned higher reward values on average than incorrect answers. On the other hand, except for Sky-Qwen-4B, the reward models have higher variance for the rewards of incorrect answers than for those of correct answers across all three datasets.}\label{tab:experiment:2}
\centering
\scalebox{0.8}{
\begin{tabular}{l|cc|cc|cc|cc}
\toprule
RM & $\mu_c$ & $\mu_w$ & $\sigma^2_c$ & $\sigma^2_w$ & $\gamma_c$ & $\gamma_w$ & $T_c$ & $T_w$ \\
\midrule
\multicolumn{9}{l}{\textit{GSM8K\_Llama-3-8B-Instruct}} \\
RAMO & -2.8 & -4 & 0.49 & 1.4 & -1.1 & -0.97 & 0.77 & 0.75 \\
allenai-RB2 & 0.42 & -0.86 & 0.37 & 1.3 & -1.6 & -1.2 & 0.64 & 0.57 \\
Sky-Llama-8B & 11 & 5.7 & 22 & 28 & 0.32 & -0.41 & 1.4 & 0.89 \\
Sky-Qwen-1.7B & 4.2 & 1.1 & 1.9 & 5.3 & -0.98 & -0.77 & 0.71 & 0.72 \\
Sky-Qwen-4B & 11 & 7.8 & 1.9 & 6.3 & -1 & -1.1 & 0.69 & 0.67 \\
Sky-Qwen-8B & 6.4 & 2.5 & 2.2 & 7.5 & -1.2 & -0.89 & 0.72 & 0.71 \\
\midrule
\multicolumn{9}{l}{\textit{MATH\_Llama-3-8B-Instruct}} \\
RAMO & -2.1 & -3.2 & 2 & 4.3 & -1.6 & -1.3 & 0.75 & 0.65 \\
allenai-RB2 & -0.069 & -1 & 0.87 & 1.8 & -1.1 & -0.89 & 0.69 & 0.68 \\
Sky-Llama-8B & 18 & 11 & 53 & 74 & -0.22 & 0.41 & 1.3 & 1.3 \\
Sky-Qwen-1.7B & 6.1 & 3.4 & 3.7 & 8.1 & -1 & -0.8 & 0.64 & 0.68 \\
Sky-Qwen-4B & 17 & 11 & 35 & 22 & 0.033 & 0.39 & 1.6 & 1.1 \\
Sky-Qwen-8B & 8.3 & 4.7 & 7.1 & 12 & -0.94 & -0.89 & 0.7 & 0.66 \\
\midrule
\multicolumn{9}{l}{\textit{MATH\_Pythia-12B}} \\
RAMO & -5.4 & -7.3 & 3.7 & 6.9 & -0.38 & -0.41 & 0.83 & 0.76 \\
allenai-RB2 & -3.8 & -5.3 & 2.3 & 2.5 & 0.022 & 0.35 & 1.1 & 1.3 \\
Sky-Llama-8B & 0.29 & -6.1 & 42 & 48 & -0.032 & -0.0075 & 0.88 & 0.98 \\
Sky-Qwen-1.7B & -0.93 & -4.2 & 9.6 & 11 & -0.13 & 0.24 & 0.91 & 1.2 \\
Sky-Qwen-4B & 5.3 & 1.2 & 17 & 15 & 0.15 & 0.33 & 0.97 & 1.3 \\
Sky-Qwen-8B & -0.78 & -5.2 & 19 & 20 & -0.2 & 0.3 & 0.88 & 1.3 \\
\bottomrule
\end{tabular}
}
\end{table}

\begin{table}[]
\caption{Goodness of fit of the reward distributions for correct and incorrect answers to representative probability distributions. $c, w$ are subscripts denoting correct and incorrect answers, respectively. Even when the displayed values appear identical, the judgment is based on the internal numerical ranking.}\label{tab:experiment:1}
\centering
\scalebox{0.76}{
\begin{tabular}{l|ccccccccc}
\toprule
 & normal & student\_t & skew\_normal & skew\_t & gen\_normal & logistic & laplace & johnson\_su & gumbel \\
\midrule
\multicolumn{10}{l}{\textit{RAMO}} \\
AD pass rate (c) $\uparrow$ & 0.47 & 0.53 & \underline{0.74} & \textbf{0.81} & 0.54 & 0.44 & 0.23 & \underline{0.67} & 0.090 \\
AD pass rate (w) $\uparrow$ & 0.047 & 0.16 & \underline{0.31} & \textbf{0.35} & 0.15 & 0.062 & 0.047 & \underline{0.34} & 0.0078 \\
AIC mean (c) $\downarrow$ & 740 & 740 & \textbf{730} & \underline{730} & 740 & 740 & 750 & \underline{730} & 790 \\
AIC mean (w) $\downarrow$ & 1400 & 1400 & \textbf{1400} & \underline{1400} & 1400 & 1400 & 1400 & \underline{1400} & 1500 \\
BIC mean (c) $\downarrow$ & 750 & 750 & \textbf{740} & \underline{740} & 750 & 750 & 760 & \underline{740} & 800 \\
BIC mean (w) $\downarrow$ & 1400 & 1400 & \textbf{1400} & \underline{1400} & 1400 & 1400 & 1500 & \underline{1400} & 1500 \\
log-lik mean (c) $\uparrow$ & -2.0 & -2.0 & \underline{-2.0} & \underline{-2.0} & -2.0 & -2.0 & -2.1 & \textbf{-2.0} & -2.2 \\
log-lik mean (w) $\uparrow$ & -2.4 & -2.4 & \underline{-2.3} & \underline{-2.3} & -2.3 & -2.4 & -2.4 & \textbf{-2.3} & -2.5 \\
\midrule
\multicolumn{10}{l}{\textit{allenai-RB2}} \\
AD pass rate (c) $\uparrow$ & 0.37 & 0.36 & \underline{0.54} & \underline{0.50} & \textbf{0.58} & 0.26 & 0.14 & 0.46 & 0.23 \\
AD pass rate (w) $\uparrow$ & 0.21 & 0.27 & \underline{0.65} & \underline{0.64} & 0.30 & 0.17 & 0.031 & \textbf{0.66} & 0.20 \\
AIC mean (c) $\downarrow$ & 660 & 660 & \underline{640} & 650 & \underline{650} & 660 & 680 & \textbf{640} & 680 \\
AIC mean (w) $\downarrow$ & 1100 & 1100 & \textbf{1100} & \underline{1100} & 1100 & 1100 & 1100 & \underline{1100} & 1100 \\
BIC mean (c) $\downarrow$ & 660 & 670 & \underline{650} & 660 & \underline{660} & 670 & 690 & \textbf{650} & 690 \\
BIC mean (w) $\downarrow$ & 1100 & 1100 & \textbf{1100} & 1100 & \underline{1100} & 1100 & 1100 & \underline{1100} & 1100 \\
log-lik mean (c) $\uparrow$ & -1.8 & -1.8 & \underline{-1.8} & -1.8 & \underline{-1.8} & -1.8 & -1.8 & \textbf{-1.7} & -1.8 \\
log-lik mean (w) $\uparrow$ & -1.8 & -1.8 & \textbf{-1.8} & \underline{-1.8} & -1.8 & -1.8 & -1.9 & \underline{-1.8} & -1.8 \\
\midrule
\multicolumn{10}{l}{\textit{Sky-Llama-8B}} \\
AD pass rate (c) $\uparrow$ & 0.40 & 0.50 & \underline{0.59} & \textbf{0.65} & 0.54 & 0.38 & 0.31 & \underline{0.59} & 0.13 \\
AD pass rate (w) $\uparrow$ & 0.45 & 0.48 & \textbf{0.67} & \underline{0.63} & 0.52 & 0.34 & 0.031 & \underline{0.62} & 0.0078 \\
AIC mean (c) $\downarrow$ & 1200 & 1200 & \underline{1200} & \underline{1200} & 1200 & 1200 & 1200 & \textbf{1100} & 1200 \\
AIC mean (w) $\downarrow$ & 2000 & 2000 & \textbf{2000} & 2000 & \underline{2000} & 2000 & 2000 & \underline{2000} & 2000 \\
BIC mean (c) $\downarrow$ & 1200 & 1200 & \underline{1200} & 1200 & \underline{1200} & 1200 & 1200 & \textbf{1100} & 1200 \\
BIC mean (w) $\downarrow$ & \underline{2000} & 2000 & \textbf{2000} & 2000 & \underline{2000} & 2000 & 2000 & 2000 & 2000 \\
log-lik mean (c) $\uparrow$ & -3.2 & -3.2 & \underline{-3.2} & \underline{-3.2} & -3.2 & -3.2 & -3.2 & \textbf{-3.1} & -3.3 \\
log-lik mean (w) $\uparrow$ & -3.3 & -3.3 & \textbf{-3.3} & \underline{-3.3} & -3.3 & -3.3 & -3.4 & \underline{-3.3} & -3.4 \\
\midrule
\multicolumn{10}{l}{\textit{Sky-Qwen-1.7B}} \\
AD pass rate (c) $\uparrow$ & 0.45 & 0.44 & \textbf{0.65} & 0.56 & \underline{0.60} & 0.38 & 0.23 & \underline{0.59} & 0.15 \\
AD pass rate (w) $\uparrow$ & 0.23 & 0.27 & \underline{0.43} & 0.35 & \textbf{0.45} & 0.13 & 0 & \underline{0.40} & 0.078 \\
AIC mean (c) $\downarrow$ & 920 & 920 & \textbf{910} & 920 & \underline{920} & 930 & 940 & \underline{910} & 950 \\
AIC mean (w) $\downarrow$ & 1600 & 1600 & \textbf{1600} & 1600 & \underline{1600} & 1600 & 1600 & \underline{1600} & 1600 \\
BIC mean (c) $\downarrow$ & \underline{930} & 930 & \textbf{920} & 930 & \underline{920} & 930 & 950 & 930 & 960 \\
BIC mean (w) $\downarrow$ & \underline{1600} & 1600 & \textbf{1600} & 1600 & \underline{1600} & 1600 & 1600 & 1600 & 1600 \\
log-lik mean (c) $\uparrow$ & -2.5 & -2.5 & \textbf{-2.5} & -2.5 & \underline{-2.5} & -2.5 & -2.6 & \underline{-2.5} & -2.6 \\
log-lik mean (w) $\uparrow$ & -2.6 & -2.6 & \textbf{-2.6} & -2.6 & \underline{-2.6} & -2.6 & -2.7 & \underline{-2.6} & -2.6 \\
\midrule
\multicolumn{10}{l}{\textit{Sky-Qwen-4B}} \\
AD pass rate (c) $\uparrow$ & 0.40 & 0.54 & \textbf{0.62} & 0.55 & \underline{0.60} & 0.42 & 0.29 & \underline{0.60} & 0.15 \\
AD pass rate (w) $\uparrow$ & 0.20 & 0.21 & \textbf{0.55} & \underline{0.52} & 0.31 & 0.086 & 0.0078 & \underline{0.47} & 0.20 \\
AIC mean (c) $\downarrow$ & 1000 & 1000 & \underline{1000} & \underline{990} & 1000 & 1000 & 1000 & \textbf{990} & 1000 \\
AIC mean (w) $\downarrow$ & 1600 & 1600 & \textbf{1600} & \underline{1600} & 1600 & 1600 & 1700 & \underline{1600} & 1600 \\
BIC mean (c) $\downarrow$ & 1000 & 1000 & \textbf{1000} & 1000 & \underline{1000} & 1000 & 1000 & \underline{1000} & 1000 \\
BIC mean (w) $\downarrow$ & 1600 & 1700 & \textbf{1600} & 1600 & \underline{1600} & 1700 & 1700 & \underline{1600} & 1700 \\
log-lik mean (c) $\uparrow$ & -2.7 & -2.7 & \underline{-2.7} & \underline{-2.7} & -2.7 & -2.7 & -2.8 & \textbf{-2.7} & -2.8 \\
log-lik mean (w) $\uparrow$ & -2.7 & -2.7 & \textbf{-2.7} & \underline{-2.7} & -2.7 & -2.7 & -2.8 & \underline{-2.7} & -2.7 \\
\midrule
\multicolumn{10}{l}{\textit{Sky-Qwen-8B}} \\
AD pass rate (c) $\uparrow$ & 0.41 & 0.42 & \underline{0.56} & 0.50 & \textbf{0.62} & 0.31 & 0.22 & \underline{0.54} & 0.14 \\
AD pass rate (w) $\uparrow$ & 0.086 & 0.094 & \underline{0.41} & \underline{0.34} & 0.28 & 0.023 & 0.016 & \textbf{0.41} & 0.16 \\
AIC mean (c) $\downarrow$ & 1000 & 1000 & \textbf{1000} & 1000 & \underline{1000} & 1000 & 1100 & \underline{1000} & 1100 \\
AIC mean (w) $\downarrow$ & 1700 & 1700 & \textbf{1700} & 1700 & \underline{1700} & 1700 & 1800 & \underline{1700} & 1700 \\
BIC mean (c) $\downarrow$ & 1000 & 1100 & \textbf{1000} & 1000 & \underline{1000} & 1100 & 1100 & \underline{1000} & 1100 \\
BIC mean (w) $\downarrow$ & 1700 & 1700 & \textbf{1700} & 1700 & \underline{1700} & 1800 & 1800 & \underline{1700} & 1700 \\
log-lik mean (c) $\uparrow$ & -2.9 & -2.9 & \textbf{-2.8} & -2.8 & \underline{-2.8} & -2.9 & -2.9 & \underline{-2.8} & -2.9 \\
log-lik mean (w) $\uparrow$ & -2.9 & -2.9 & \textbf{-2.9} & -2.9 & \underline{-2.9} & -2.9 & -3.0 & \underline{-2.9} & -2.9 \\
\bottomrule
\end{tabular}

}
\end{table}

\subsection{Full Results of Scaling Law Experiments}
In addition, Figure~\ref{fig:scaling theory all} shows the full results of the experiment verifying the scaling law using synthetic data, demonstrating that the theory explains the experimental results well regardless of the parameters of the normal distribution.
\begin{figure}[htb]
\includegraphics[width=\linewidth]{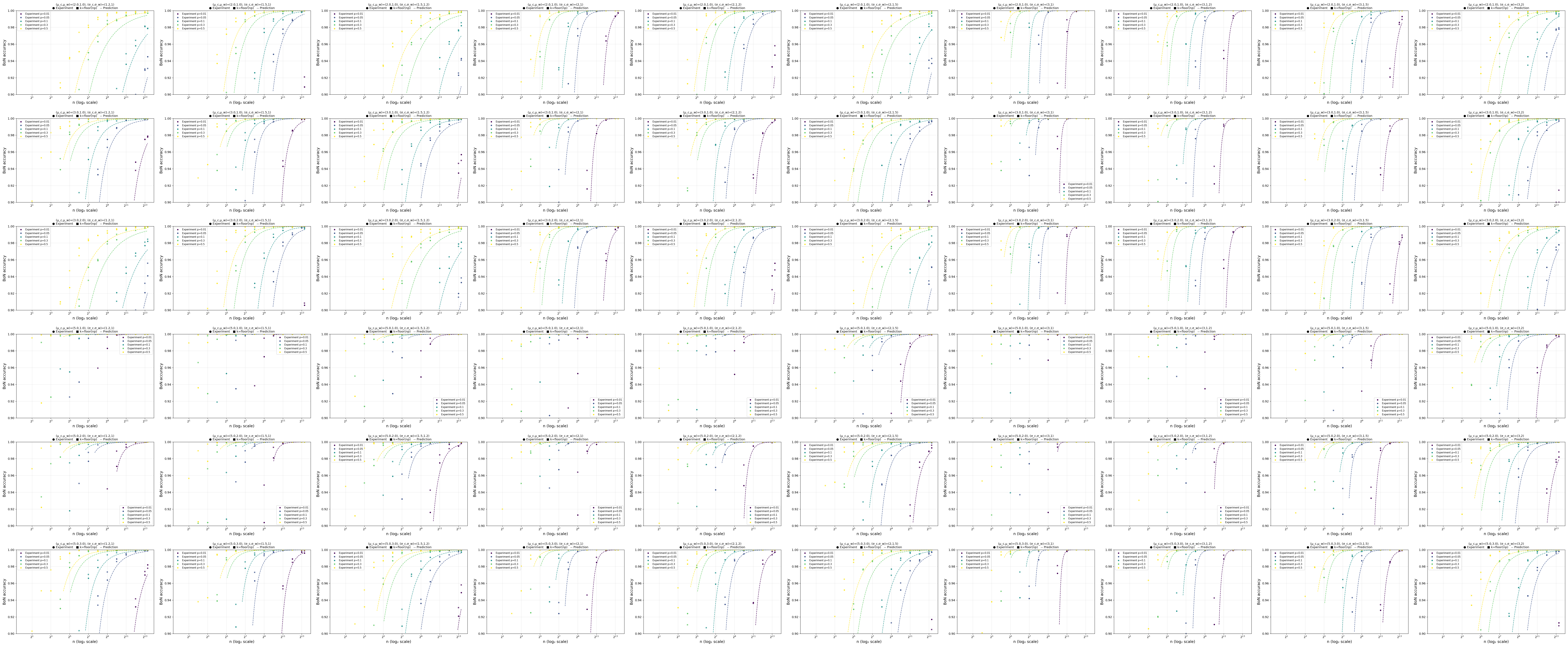}
  \caption{
    Full results of the experiment verifying the scaling law using synthetic data.}
  \label{fig:scaling theory all}
\end{figure}

\subsection{An Experimental Comparison of Early Stopping Methods}
We experimentally compare our Tail-Tension Early Stopping with early stopping methods proposed in \cite{DBLP:journals/corr/abs-2510-01394} but do not compare against \cite{DBLP:journals/corr/abs-2510-15945} and \cite{DBLP:journals/corr/abs-2503-00031}, because the former is not an anytime algorithm, requiring total budget information beforehand and the latter requires training an additional calibration score.

\textbf{Experimental Settings.} The methods \texttt{ucb\_t090}, \texttt{ucb\_t095}, \texttt{ucb\_t099}, \texttt{ucb\_c1e-3}, and \texttt{ucb\_c2e-4} are variants proposed by \cite{DBLP:journals/corr/abs-2510-01394}. The \texttt{ucb\_t090}, \texttt{ucb\_t095}, \texttt{ucb\_t099} correspond to the Target Acceptance Rate Variant described in Section 4.3, with $\tau_{\text{target}} = 0.90, 0.95, 0.99$, independently. The \texttt{ucb\_c1e-3}, and \texttt{ucb\_c2e-4} correspond to the variants of Algorithm 1 in the paper with $c=10^{-3}$ and $2\times 10^{-4}$. The parameter $t_{\mathrm{min}}$ in \cite{DBLP:journals/corr/abs-2510-01394} corresponds to $n_{\mathrm{th}}$ in our method. To ensure a fair comparison, we used the same value as $n_{\mathrm{th}}=t_{\mathrm{min}}=32$ Finally, because the maximum total computational budget was fixed at in these experiments and can not fully consumed , we report entries for which no accuracy value could be obtained as N/A.

Since different early-stopping methods use different amounts of computation, we compare accuracy under the same effective computational budget, allowing computation saved by early stopping on one sample to be reused for other samples. For each method, we linearly interpolate between adjacent points on its accuracy–budget curve and report the interpolate values at budgets of the form $n=2^{x}$.

\textbf{Results.} The result is shown in the table~\ref{tab:early stopping comparison}. The result shows that our method is comparable or outperforms all five variants at the same effective budget, especially for larger effective budget settings. 
\begin{table}[htb]
\caption{Experimental comparison of our Tail-Tension Early Stopping with \cite{DBLP:journals/corr/abs-2510-01394}.}\label{tab:early stopping comparison}
  \centering
  \begin{tabular}{rcccccc}
    \toprule
    Cost & Ours & \texttt{ucb\_t090} & \texttt{ucb\_t095} & \texttt{ucb\_t099}
         & \texttt{ucb\_c1e-3} & \texttt{ucb\_c2e-4} \\
    \midrule
       4 & \textbf{0.288} & 0.285 & 0.285 & 0.285 & 0.285 & 0.285 \\
       8 & \textbf{0.299} & 0.298 & 0.298 & 0.298 & 0.298 & 0.298 \\
      16 & \textbf{0.309} & 0.308 & 0.308 & 0.308 & 0.308 & 0.308 \\
      32 & 0.316 & 0.317 & 0.317 & 0.316 & \textbf{0.318} & \textbf{0.318} \\
      64 & 0.322 & \textbf{0.323} & \textbf{0.323} & \textbf{0.323} & 0.318 & 0.318 \\
     128 & \textbf{0.327} & 0.324 & 0.325 & 0.326 & 0.317 & 0.317 \\
     256 & \textbf{0.332} & 0.323 & 0.324 & 0.325 & 0.317 & 0.316 \\
     512 & \textbf{0.335} & 0.322 & 0.323 & 0.324 & 0.316 & 0.315 \\
    1024 & \textbf{0.339} & 0.321 & 0.321 & 0.322 & N/A & 0.315 \\
    2048 & 0.342 & N/A & N/A & N/A & N/A & N/A \\
    4096 & 0.345 & N/A & N/A & N/A & N/A & N/A \\
    8192 & N/A & N/A & N/A & N/A & N/A & N/A \\
    \bottomrule
  \end{tabular}
\end{table}


\newpage
\clearpage
\section*{NeurIPS Paper Checklist}

\begin{enumerate}

\item {\bf Claims}
    \item[] Question: Do the main claims made in the abstract and introduction accurately reflect the paper's contributions and scope?
    \item[] Answer: \answerYes{} 
    \item[] Justification:  The claims in the abstract and introduction match the theoretical analysis, proposed method, and experiments, and are limited to verifiable problems with reward-model-based selection.
    \item[] Guidelines:
    \begin{itemize}
        \item The answer \answerNA{} means that the abstract and introduction do not include the claims made in the paper.
        \item The abstract and/or introduction should clearly state the claims made, including the contributions made in the paper and important assumptions and limitations. A \answerNo{} or \answerNA{} answer to this question will not be perceived well by the reviewers. 
        \item The claims made should match theoretical and experimental results, and reflect how much the results can be expected to generalize to other settings. 
        \item It is fine to include aspirational goals as motivation as long as it is clear that these goals are not attained by the paper. 
    \end{itemize}

\item {\bf Limitations}
    \item[] Question: Does the paper discuss the limitations of the work performed by the authors?
    \item[] Answer: \answerYes{} 
    \item[] Justification: The paper discusses limitations in Section~\ref{sec:conclusion}.
    \item[] Guidelines:
    \begin{itemize}
        \item The answer \answerNA{} means that the paper has no limitation while the answer \answerNo{} means that the paper has limitations, but those are not discussed in the paper. 
        \item The authors are encouraged to create a separate ``Limitations'' section in their paper.
        \item The paper should point out any strong assumptions and how robust the results are to violations of these assumptions (e.g., independence assumptions, noiseless settings, model well-specification, asymptotic approximations only holding locally). The authors should reflect on how these assumptions might be violated in practice and what the implications would be.
        \item The authors should reflect on the scope of the claims made, e.g., if the approach was only tested on a few datasets or with a few runs. In general, empirical results often depend on implicit assumptions, which should be articulated.
        \item The authors should reflect on the factors that influence the performance of the approach. For example, a facial recognition algorithm may perform poorly when image resolution is low or images are taken in low lighting. Or a speech-to-text system might not be used reliably to provide closed captions for online lectures because it fails to handle technical jargon.
        \item The authors should discuss the computational efficiency of the proposed algorithms and how they scale with dataset size.
        \item If applicable, the authors should discuss possible limitations of their approach to address problems of privacy and fairness.
        \item While the authors might fear that complete honesty about limitations might be used by reviewers as grounds for rejection, a worse outcome might be that reviewers discover limitations that aren't acknowledged in the paper. The authors should use their best judgment and recognize that individual actions in favor of transparency play an important role in developing norms that preserve the integrity of the community. Reviewers will be specifically instructed to not penalize honesty concerning limitations.
    \end{itemize}

\item {\bf Theory assumptions and proofs}
    \item[] Question: For each theoretical result, does the paper provide the full set of assumptions and a complete (and correct) proof?
    \item[] Answer: \answerYes{} 
    \item[] Justification: The theoretical assumptions are stated in the main text, and complete proofs of the lemmas and theorems are provided in the appendix.
    \item[] Guidelines:
    \begin{itemize}
        \item The answer \answerNA{} means that the paper does not include theoretical results. 
        \item All the theorems, formulas, and proofs in the paper should be numbered and cross-referenced.
        \item All assumptions should be clearly stated or referenced in the statement of any theorems.
        \item The proofs can either appear in the main paper or the supplemental material, but if they appear in the supplemental material, the authors are encouraged to provide a short proof sketch to provide intuition. 
        \item Inversely, any informal proof provided in the core of the paper should be complemented by formal proofs provided in appendix or supplemental material.
        \item Theorems and Lemmas that the proof relies upon should be properly referenced. 
    \end{itemize}

    \item {\bf Experimental result reproducibility}
    \item[] Question: Does the paper fully disclose all the information needed to reproduce the main experimental results of the paper to the extent that it affects the main claims and/or conclusions of the paper (regardless of whether the code and data are provided or not)?
    \item[] Answer: \answerYes{} 
    \item[] Justification: The paper specifies the datasets, reward models, evaluation protocol, simulation procedure, and compute setup used in the experiments.
    \item[] Guidelines:
    \begin{itemize}
        \item The answer \answerNA{} means that the paper does not include experiments.
        \item If the paper includes experiments, a \answerNo{} answer to this question will not be perceived well by the reviewers: Making the paper reproducible is important, regardless of whether the code and data are provided or not.
        \item If the contribution is a dataset and\slash or model, the authors should describe the steps taken to make their results reproducible or verifiable. 
        \item Depending on the contribution, reproducibility can be accomplished in various ways. For example, if the contribution is a novel architecture, describing the architecture fully might suffice, or if the contribution is a specific model and empirical evaluation, it may be necessary to either make it possible for others to replicate the model with the same dataset, or provide access to the model. In general. releasing code and data is often one good way to accomplish this, but reproducibility can also be provided via detailed instructions for how to replicate the results, access to a hosted model (e.g., in the case of a large language model), releasing of a model checkpoint, or other means that are appropriate to the research performed.
        \item While NeurIPS does not require releasing code, the conference does require all submissions to provide some reasonable avenue for reproducibility, which may depend on the nature of the contribution. For example
        \begin{enumerate}
            \item If the contribution is primarily a new algorithm, the paper should make it clear how to reproduce that algorithm.
            \item If the contribution is primarily a new model architecture, the paper should describe the architecture clearly and fully.
            \item If the contribution is a new model (e.g., a large language model), then there should either be a way to access this model for reproducing the results or a way to reproduce the model (e.g., with an open-source dataset or instructions for how to construct the dataset).
            \item We recognize that reproducibility may be tricky in some cases, in which case authors are welcome to describe the particular way they provide for reproducibility. In the case of closed-source models, it may be that access to the model is limited in some way (e.g., to registered users), but it should be possible for other researchers to have some path to reproducing or verifying the results.
        \end{enumerate}
    \end{itemize}

\item {\bf Open access to data and code}
    \item[] Question: Does the paper provide open access to the data and code, with sufficient instructions to faithfully reproduce the main experimental results, as described in supplemental material?
    \item[] Answer: \answerNo{} 
    \item[] Justification: The paper uses publicly available datasets and models and describes them with URL in the paper, but the full reproduction code is not included in the current submission.
    \item[] Guidelines:
    \begin{itemize}
        \item The answer \answerNA{} means that paper does not include experiments requiring code.
        \item Please see the NeurIPS code and data submission guidelines (\url{https://neurips.cc/public/guides/CodeSubmissionPolicy}) for more details.
        \item While we encourage the release of code and data, we understand that this might not be possible, so \answerNo{} is an acceptable answer. Papers cannot be rejected simply for not including code, unless this is central to the contribution (e.g., for a new open-source benchmark).
        \item The instructions should contain the exact command and environment needed to run to reproduce the results. See the NeurIPS code and data submission guidelines (\url{https://neurips.cc/public/guides/CodeSubmissionPolicy}) for more details.
        \item The authors should provide instructions on data access and preparation, including how to access the raw data, preprocessed data, intermediate data, and generated data, etc.
        \item The authors should provide scripts to reproduce all experimental results for the new proposed method and baselines. If only a subset of experiments are reproducible, they should state which ones are omitted from the script and why.
        \item At submission time, to preserve anonymity, the authors should release anonymized versions (if applicable).
        \item Providing as much information as possible in supplemental material (appended to the paper) is recommended, but including URLs to data and code is permitted.
    \end{itemize}

\item {\bf Experimental setting/details}
    \item[] Question: Does the paper specify all the training and test details (e.g., data splits, hyperparameters, how they were chosen, type of optimizer) necessary to understand the results?
    \item[] Answer: \answerYes{} 
    \item[] Justification: The paper describes the datasets, reward models, sampling protocol, number of trials, and hardware used for the main experiments.
    \item[] Guidelines:
    \begin{itemize}
        \item The answer \answerNA{} means that the paper does not include experiments.
        \item The experimental setting should be presented in the core of the paper to a level of detail that is necessary to appreciate the results and make sense of them.
        \item The full details can be provided either with the code, in appendix, or as supplemental material.
    \end{itemize}

\item {\bf Experiment statistical significance}
    \item[] Question: Does the paper report error bars suitably and correctly defined or other appropriate information about the statistical significance of the experiments?
    \item[] Answer: \answerYes{} 
    \item[] Justification: The paper reports standard deviation in the experiments of the proposed method.
    \item[] Guidelines:
    \begin{itemize}
        \item The answer \answerNA{} means that the paper does not include experiments.
        \item The authors should answer \answerYes{} if the results are accompanied by error bars, confidence intervals, or statistical significance tests, at least for the experiments that support the main claims of the paper.
        \item The factors of variability that the error bars are capturing should be clearly stated (for example, train/test split, initialization, random drawing of some parameter, or overall run with given experimental conditions).
        \item The method for calculating the error bars should be explained (closed form formula, call to a library function, bootstrap, etc.)
        \item The assumptions made should be given (e.g., Normally distributed errors).
        \item It should be clear whether the error bar is the standard deviation or the standard error of the mean.
        \item It is OK to report 1-sigma error bars, but one should state it. The authors should preferably report a 2-sigma error bar than state that they have a 96\% CI, if the hypothesis of Normality of errors is not verified.
        \item For asymmetric distributions, the authors should be careful not to show in tables or figures symmetric error bars that would yield results that are out of range (e.g., negative error rates).
        \item If error bars are reported in tables or plots, the authors should explain in the text how they were calculated and reference the corresponding figures or tables in the text.
    \end{itemize}

\item {\bf Experiments compute resources}
    \item[] Question: For each experiment, does the paper provide sufficient information on the computer resources (type of compute workers, memory, time of execution) needed to reproduce the experiments?
    \item[] Answer: \answerYes{} 
    \item[] Justification: The paper reports the GPU type and memory used for reward-model inference and describes the computationally lightweight simulation setup.
    \item[] Guidelines:
    \begin{itemize}
        \item The answer \answerNA{} means that the paper does not include experiments.
        \item The paper should indicate the type of compute workers CPU or GPU, internal cluster, or cloud provider, including relevant memory and storage.
        \item The paper should provide the amount of compute required for each of the individual experimental runs as well as estimate the total compute. 
        \item The paper should disclose whether the full research project required more compute than the experiments reported in the paper (e.g., preliminary or failed experiments that didn't make it into the paper). 
    \end{itemize}
    
\item {\bf Code of ethics}
    \item[] Question: Does the research conducted in the paper conform, in every respect, with the NeurIPS Code of Ethics \url{https://neurips.cc/public/EthicsGuidelines}?
    \item[] Answer: \answerYes{} 
    \item[] Justification: The work uses public datasets and models, does not involve human-subject experiments or private data, and follows the NeurIPS Code of Ethics.
    \item[] Guidelines:
    \begin{itemize}
        \item The answer \answerNA{} means that the authors have not reviewed the NeurIPS Code of Ethics.
        \item If the authors answer \answerNo, they should explain the special circumstances that require a deviation from the Code of Ethics.
        \item The authors should make sure to preserve anonymity (e.g., if there is a special consideration due to laws or regulations in their jurisdiction).
    \end{itemize}

\item {\bf Broader impacts}
    \item[] Question: Does the paper discuss both potential positive societal impacts and negative societal impacts of the work performed?
    \item[] Answer: \answerNA{} 
    \item[] Justification: Our analysis is a general theory for understanding Best-of-N sampling. The societal impacts depend heavily on the specific use cases and deployment contexts. Therefore, a detailed discussion of societal impacts falls outside the scope of this work.
    \item[] Guidelines:
    \begin{itemize}
        \item The answer \answerNA{} means that there is no societal impact of the work performed.
        \item If the authors answer \answerNA{} or \answerNo, they should explain why their work has no societal impact or why the paper does not address societal impact.
        \item Examples of negative societal impacts include potential malicious or unintended uses (e.g., disinformation, generating fake profiles, surveillance), fairness considerations (e.g., deployment of technologies that could make decisions that unfairly impact specific groups), privacy considerations, and security considerations.
        \item The conference expects that many papers will be foundational research and not tied to particular applications, let alone deployments. However, if there is a direct path to any negative applications, the authors should point it out. For example, it is legitimate to point out that an improvement in the quality of generative models could be used to generate Deepfakes for disinformation. On the other hand, it is not needed to point out that a generic algorithm for optimizing neural networks could enable people to train models that generate Deepfakes faster.
        \item The authors should consider possible harms that could arise when the technology is being used as intended and functioning correctly, harms that could arise when the technology is being used as intended but gives incorrect results, and harms following from (intentional or unintentional) misuse of the technology.
        \item If there are negative societal impacts, the authors could also discuss possible mitigation strategies (e.g., gated release of models, providing defenses in addition to attacks, mechanisms for monitoring misuse, mechanisms to monitor how a system learns from feedback over time, improving the efficiency and accessibility of ML).
    \end{itemize}
    
\item {\bf Safeguards}
    \item[] Question: Does the paper describe safeguards that have been put in place for responsible release of data or models that have a high risk for misuse (e.g., pre-trained language models, image generators, or scraped datasets)?
    \item[] Answer: \answerNA{} 
    \item[] Justification: The paper does not release new high-risk models, scraped datasets, or other assets requiring additional safeguards.
    \item[] Guidelines:
    \begin{itemize}
        \item The answer \answerNA{} means that the paper poses no such risks.
        \item Released models that have a high risk for misuse or dual-use should be released with necessary safeguards to allow for controlled use of the model, for example by requiring that users adhere to usage guidelines or restrictions to access the model or implementing safety filters. 
        \item Datasets that have been scraped from the Internet could pose safety risks. The authors should describe how they avoided releasing unsafe images.
        \item We recognize that providing effective safeguards is challenging, and many papers do not require this, but we encourage authors to take this into account and make a best faith effort.
    \end{itemize}

\item {\bf Licenses for existing assets}
    \item[] Question: Are the creators or original owners of assets (e.g., code, data, models), used in the paper, properly credited and are the license and terms of use explicitly mentioned and properly respected?
    \item[] Answer: \answerNo{} 
    \item[] Justification: The paper cites the existing datasets and models and following the licenses, but does not yet explicitly list their licenses or terms of use.
    \item[] Guidelines:
    \begin{itemize}
        \item The answer \answerNA{} means that the paper does not use existing assets.
        \item The authors should cite the original paper that produced the code package or dataset.
        \item The authors should state which version of the asset is used and, if possible, include a URL.
        \item The name of the license (e.g., CC-BY 4.0) should be included for each asset.
        \item For scraped data from a particular source (e.g., website), the copyright and terms of service of that source should be provided.
        \item If assets are released, the license, copyright information, and terms of use in the package should be provided. For popular datasets, \url{paperswithcode.com/datasets} has curated licenses for some datasets. Their licensing guide can help determine the license of a dataset.
        \item For existing datasets that are re-packaged, both the original license and the license of the derived asset (if it has changed) should be provided.
        \item If this information is not available online, the authors are encouraged to reach out to the asset's creators.
    \end{itemize}

\item {\bf New assets}
    \item[] Question: Are new assets introduced in the paper well documented and is the documentation provided alongside the assets?
    \item[] Answer: \answerNA{} 
    \item[] Justification: The paper does not release a new dataset, model checkpoint, benchmark, or other standalone asset.
    \item[] Guidelines:
    \begin{itemize}
        \item The answer \answerNA{} means that the paper does not release new assets.
        \item Researchers should communicate the details of the dataset\slash code\slash model as part of their submissions via structured templates. This includes details about training, license, limitations, etc. 
        \item The paper should discuss whether and how consent was obtained from people whose asset is used.
        \item At submission time, remember to anonymize your assets (if applicable). You can either create an anonymized URL or include an anonymized zip file.
    \end{itemize}

\item {\bf Crowdsourcing and research with human subjects}
    \item[] Question: For crowdsourcing experiments and research with human subjects, does the paper include the full text of instructions given to participants and screenshots, if applicable, as well as details about compensation (if any)? 
    \item[] Answer: \answerNA{} 
    \item[] Justification:  The paper does not involve crowdsourcing or new research with human subjects.
    \item[] Guidelines:
    \begin{itemize}
        \item The answer \answerNA{} means that the paper does not involve crowdsourcing nor research with human subjects.
        \item Including this information in the supplemental material is fine, but if the main contribution of the paper involves human subjects, then as much detail as possible should be included in the main paper. 
        \item According to the NeurIPS Code of Ethics, workers involved in data collection, curation, or other labor should be paid at least the minimum wage in the country of the data collector. 
    \end{itemize}

\item {\bf Institutional review board (IRB) approvals or equivalent for research with human subjects}
    \item[] Question: Does the paper describe potential risks incurred by study participants, whether such risks were disclosed to the subjects, and whether Institutional Review Board (IRB) approvals (or an equivalent approval/review based on the requirements of your country or institution) were obtained?
    \item[] Answer: \answerNA{} 
    \item[] Justification: The paper does not involve human-subject research, so IRB approval is not applicable.
    \item[] Guidelines:
    \begin{itemize}
        \item The answer \answerNA{} means that the paper does not involve crowdsourcing nor research with human subjects.
        \item Depending on the country in which research is conducted, IRB approval (or equivalent) may be required for any human subjects research. If you obtained IRB approval, you should clearly state this in the paper. 
        \item We recognize that the procedures for this may vary significantly between institutions and locations, and we expect authors to adhere to the NeurIPS Code of Ethics and the guidelines for their institution. 
        \item For initial submissions, do not include any information that would break anonymity (if applicable), such as the institution conducting the review.
    \end{itemize}

\item {\bf Declaration of LLM usage}
    \item[] Question: Does the paper describe the usage of LLMs if it is an important, original, or non-standard component of the core methods in this research? Note that if the LLM is used only for writing, editing, or formatting purposes and does \emph{not} impact the core methodology, scientific rigor, or originality of the research, declaration is not required.
    \item[] Answer: \answerYes{} 
    \item[] Justification: This work investigates Best-of-n sampling of LLMs and proposes a new method for improving their performance. The methodology and experimental settings are described in Sections 5 and 6.
    \item[] Guidelines:
    \begin{itemize}
        \item The answer \answerNA{} means that the core method development in this research does not involve LLMs as any important, original, or non-standard components.
        \item Please refer to our LLM policy in the NeurIPS handbook for what should or should not be described.
    \end{itemize}

\end{enumerate}

\end{document}